\definecolor{cellHighlight}{HTML}{D6DBFD}
\theoremstyle{plain}
\newtheorem{theorem}{Theorem}[section]
\theoremstyle{definition}
\theoremstyle{remark}
\newcommand{\squishlist}{
   \begin{list}{$\bullet$}
    { \setlength{\itemsep}{0pt}      \setlength{\parsep}{0pt}
      \setlength{\topsep}{-3pt}       \setlength{\partopsep}{0pt}
      \setlength{\listparindent}{-2pt}
      \setlength{\itemindent}{-5pt}
      \setlength{\leftmargin}{1em} \setlength{\labelwidth}{0em}
      \setlength{\labelsep}{0.5em} } }
\newcommand{\squishend}{
    \end{list}  }
\definecolor{lightblue}{HTML}{145DFA}
\icmltitlerunning{Reward and Guidance through Rubrics: Promoting Exploration to Improve Multi-Domain Reasoning}
\begin{document}

\twocolumn[

\icmltitle{Reward and Guidance through Rubrics: Promoting Exploration \\to Improve Multi-Domain Reasoning}



\icmlsetsymbol{equal}{*}

\begin{icmlauthorlist}
\icmlauthor{Baolong Bi}{xxx,qqq,yyy}
\icmlauthor{Shenghua Liu}{xxx,qqq,yyy}
\icmlauthor{Yiwei Wang}{ppp}
\icmlauthor{Siqian Tong}{yyy} 
\icmlauthor{Lingrui Mei}{xxx,qqq,yyy} \\
\icmlauthor{Yuyao Ge}{xxx,qqq,yyy}
\icmlauthor{Yilong Xu}{qqq,yyy}
\icmlauthor{Jiafeng Guo}{xxx,qqq,yyy}
\icmlauthor{Xueqi Cheng}{xxx,qqq,yyy}
\end{icmlauthorlist}

\icmlaffiliation{xxx}{
Key Laboratory of Network Data Science and Technology, Institute of Computing Technology, Chinese Academy of Sciences}
\icmlaffiliation{qqq}{State Key Laboratory of AI Safety}
\icmlaffiliation{yyy}{University of Chinese Academy of Sciences}
\icmlaffiliation{ppp}{University of California, Merced}

\icmlcorrespondingauthora{Baolong Bi}{bibaolong23z@ict.ac.cn}
\icmlcorrespondingauthor{Shenghua Liu}{liushenghua@ict.ac.cn}

\icmlkeywords{Machine Learning, ICML}

\vskip 0.3in
]



\printAffiliationsAndNotice{}  

\begin{abstract}
Recent advances in reinforcement learning (RL) have significantly improved the complex reasoning capabilities of large language models (LLMs).
Despite these successes, existing methods mainly focus on single-domain RL (e.g., mathematics) with verifiable rewards (RLVR), and their reliance on purely online RL frameworks restricts the exploration space, thereby limiting reasoning performance.
In this paper, we address these limitations by leveraging rubrics to provide both fine-grained reward signals and offline guidance.
We propose \textbf{RGR-GRPO} (\textbf{R}eward and \textbf{G}uidance through \textbf{R}ubrics), a rubric-driven RL framework for multi-domain reasoning. 
RGR-GRPO enables LLMs to receive dense and informative rewards while exploring a larger solution space during GRPO training.
Extensive experiments across 14 benchmarks spanning multiple domains demonstrate that RGR-GRPO consistently outperforms RL methods that rely solely on alternative reward schemes or offline guidance.
Compared with verifiable online RL baseline, RGR-GRPO achieves average improvements of +7.0\%, +5.4\%, +8.4\%, and +6.6\% on mathematics, physics, chemistry, and general reasoning tasks, respectively.
Notably, RGR-GRPO maintains stable entropy fluctuations during off-policy training and achieves superior pass@k performance, reflecting sustained exploration and effective breakthrough beyond existing performance bottlenecks.
\end{abstract}

\section{Introduction}

Reinforcement learning (RL) has emerged as a core post-training paradigm that has substantially advanced the reasoning capabilities of large language models (LLMs)~\citep{yang2025qwen3,jaech2024openai,team2025kimi}, spanning tasks such as scientific reasoning~\citep{burgess2025microvqa}, medical question answering~\citep{arora2025healthbench}, and code generation~\citep{yang2025code}.
Among recent breakthroughs, Reinforcement Learning with Verifiable Rewards (RLVR) has been particularly effective: by leveraging rule-based and automatically verifiable rewards, RLVR enables LLMs to acquire complex reasoning skills through trial-and-error exploration.
Notably, R1-Zero~\citep{guo2025deepseek} demonstrates that directly training a base LLM with explicit scalar rewards (e.g., correctness and format) can yield impressive reasoning capabilities without supervised fine-tuning.

\begin{figure}[t]
    \centering
    \centerline{\includegraphics[width=\columnwidth]{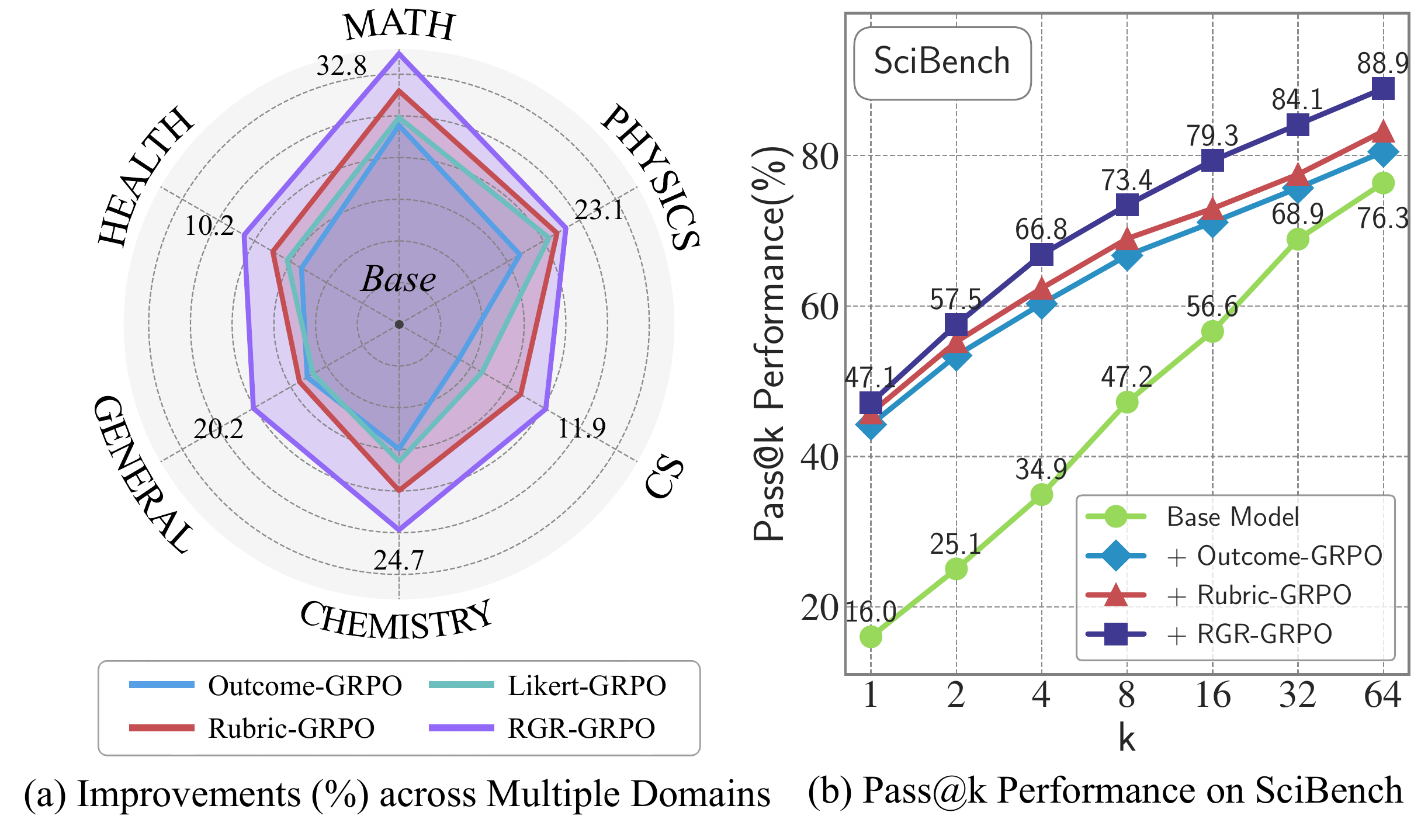}}
    \vspace{-4pt}
    \caption{Our RGR-GRPO shows strong cross-domain reasoning capability and expands the frontier of exploration.}
    \label{fig:show}
    \vspace{-10pt}
\end{figure}

Despite these successes, current RL approaches~\citep{zeng2025simplerl,yu2025dapo} face two key limitations:
\textbf{(1) Domain-limited and sparse rewards.}
Most existing methods rely on verifiable single-domain tasks such as mathematics or coding, where rule-based checking provides precise but sparse supervision.
However, open-ended multi-domain reasoning tasks often lack standard answers, making such reward design difficult to generalize and reducing training efficiency~\citep{cui2025process}.
\textbf{(2) Restricted online exploration.}
Purely online RL frameworks typically explore within a narrow policy space, constrained by limited on-policy samples and short-horizon updates.
This restricted exploration prevents models from effectively leveraging diverse reasoning trajectories or discovering higher-quality solutions beyond the immediate reward signals~\citep{zhao2025echo,yue2025does}.

In this paper, we propose to address these challenges by introducing rubrics into the GRPO~\citep{shao2024deepseekmath} training process to provide reliable dense rewards and guide offline rollout refinement.
Through preliminary experiments, we show that rubrics can provide dense and informative rewards, leading to fewer non-advantageous trajectories, and that rubric-guided self-refinement at test time enables continuous improvement across different training stages.
Building on these observations, we introduce \textbf{RGR-GRPO} (\textbf{R}eward and \textbf{G}uidance through \textbf{R}ubrics), a rubric-driven reinforcement learning framework for multi-domain reasoning.
RGR-GRPO enhances reasoning capability by supporting dense rubric-based rewards and off-policy guidance on suboptimal trajectories, thereby promoting more effective exploration across diverse domains.
Specifically, RGR-GRPO consists of two key components:
\squishlist
\item \textbf{Rubric-based fine-grained rewards.}
To overcome the limitations of single-domain training and sparse reward signals, RGR-GRPO constructs question-specific rubrics that span multiple domains.
Each rubric comprises two complementary types of evaluation criteria: \textbf{Factual} and \textbf{Process}, each assigned an adaptive weight reflecting its relative importance.
The Factual criteria verify the accuracy of intermediate and final results, whereas the Process criteria measure the logical soundness of the reasoning trajectory.
Under an LLM-as-Judge setup, RGR-GRPO evaluates generated rollouts against these criteria and aggregates the weighted scores into scalar rewards.
This rubric-based reward design provides reliable and fine-grained feedback, enabling RLVR to generalize effectively across diverse domains beyond mathematics and code.
\vspace{2mm}
\item \textbf{Rubric-guided offline exploration.}
To break the exploration bottleneck of purely online methods, RGR-GRPO further integrates rubric-based signals into an off-policy guidance mechanism, following recent advances in off-policy approaches~\citep{yan2025learning, zhang2025critique}.
Specifically, RGR-GRPO identifies high-reward rollouts under the current policy and analyzes their imperfect criteria based on rubric evaluation. These partial deficiencies are then used as targeted offline guidance, prompting the policy model to self-refine and generate improved reasoning trajectories. 
Through this process, rubrics effectively expand the exploration space of on-policy RL process, bridging the gap between dense reward learning and structured offline refinement.
\squishend

To compare RGR-GRPO with verifiable sparse-reward methods, we conduct Zero-RL training using the \textbf{WebInstruct-verified}~\citep{ma2025general} dataset, a high-quality and diverse-domain benchmark for verifiable reasoning tasks. 
We train Qwen2.5-3B and 7B models~\citep{yang2025qwen3} and evaluate them on a wide range of benchmarks spanning multiple scientific domains (mathematics, physics, and chemistry) as well as general reasoning benchmarks.
During training, purely online RL methods exhibit early entropy collapse, causing the policy to converge to limited trajectories. In contrast, RGR-GRPO, guided by offline rubric-based supervision, maintains a smoother and more gradual entropy decline—indicating sustained exploration and continual learning.
Experimental results show that RGR-GRPO consistently outperforms all baselines. On the 7B model, RGR-GRPO achieves average improvements of 7.0\%, 5.4\%, and 8.4\% in mathematics, physics, and chemistry domains, respectively, and 6.6\% in general-domain reasoning, compared with outcome-based reward methods.
Moreover, RGR-GRPO exhibits more stable training dynamics than other off-policy baselines, maintaining sustained exploration without premature collapse or entropy explosion.
Notably, across different pass@k settings on SciBench, RGR-GRPO continues to outperform standard RL, demonstrating its ability to break the exploration bottleneck and significantly enhance multi-domain reasoning capabilities.

\begin{figure*}[t]
    \centering
    \centerline{\includegraphics[width=\linewidth]{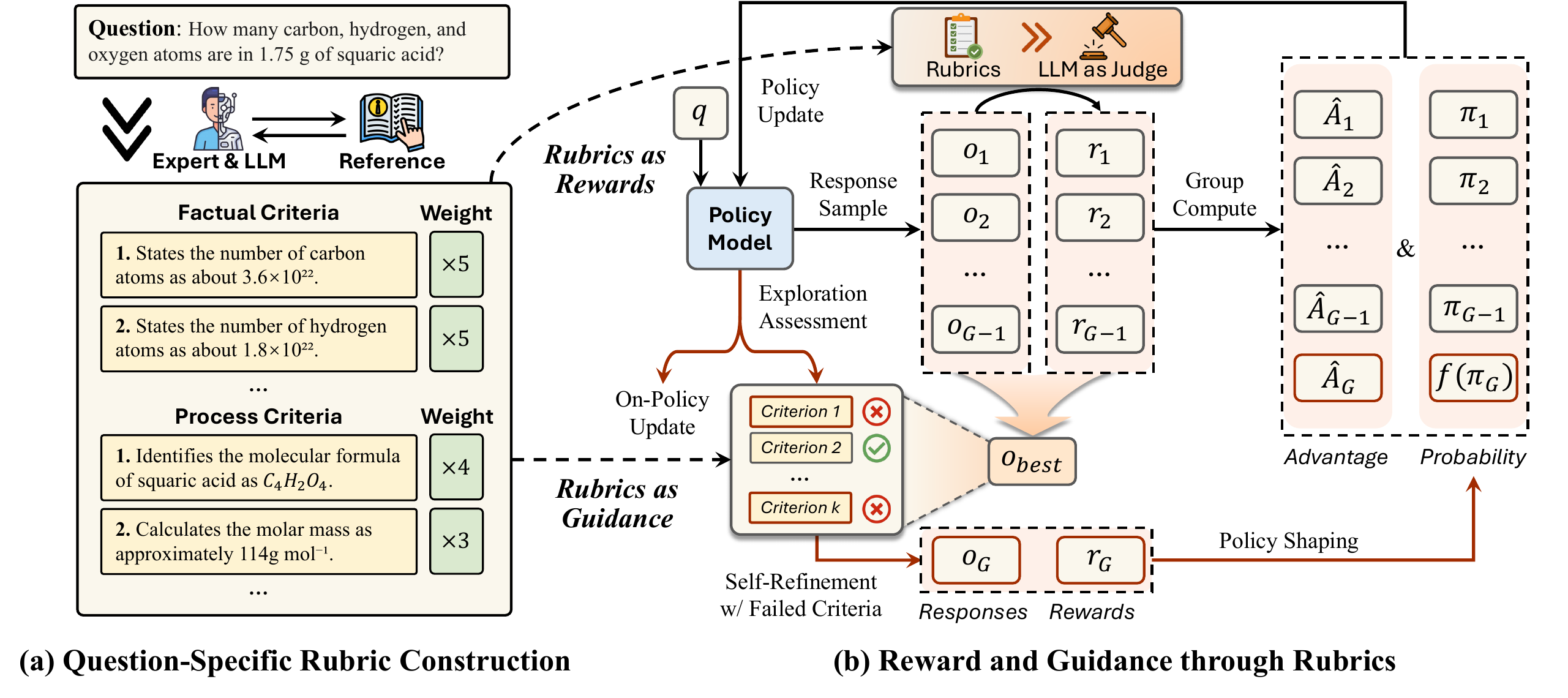}}
    \vspace{-4pt}
    \caption{Overview of the RGR-GRPO framework: (a) Construct rubrics for RL reward based on the input question and reference answer. (b) Conduct exploration assessment with the best response $o_{best}$, determining whether off-policy guidance is required. When exploration is insufficient, failed criteria are then used to refine $o_{best}$ into off-policy rollouts, and the sampling probabilities are reshaped via a shaping function to update the policy model.}
    \label{fig:frame_work}
    \vspace{-6pt}
\end{figure*}

\section{RGR-GRPO: Reward and Guidance through Rubrics}

\subsection{Preliminary}

Group Relative Policy Optimization (GRPO)~\citep{shao2024deepseekmath} is an online reinforcement learning algorithm that extends the Proximal Policy Optimization (PPO)~\citep{schulman2017proximal} framework while removing its dependency on a separate value function. 
Instead of estimating token-level advantages through a critic, GRPO evaluates the relative performance among a group of sampled responses for the same query.
Concretely, given a question $q$ drawn from the training distribution $\mathcal{D}$, the old policy $\pi_{\theta_{\text{old}}}$ generates a group of $G$ responses $\{ o_i \}_{i=1}^{G}$.
Each response is assigned a reward score $r_i$ by the reward model, and these scores are normalized within the group to compute the relative advantage:
\begin{equation}
\hat{A_i} = \frac{r_i - \mathrm{mean}\left(\{r_j\}_{j=1}^{G}\right)}
{\mathrm{std}\left(\{r_j\}_{j=1}^{G}\right)},
\label{eq:eq_1}
\end{equation}

GRPO then updates the policy parameters $\theta$ by:
{\small
\begin{equation}
\begin{aligned}
    &\mathcal{J}_{\text{GRPO}}(\theta) 
    = \mathbb{E}_{q \sim \mathcal{D},\, \{o_i\}_{i=1}^{G} \sim \pi_{\theta_{\text{old}}}(\cdot|q)} 
     \frac{1}{G} \sum_{i=1}^{G} \frac{1}{\lvert o_i \rvert} 
        \sum_{t=1}^{|o_i|}
        \Bigg\{  \min \Big[ \\
               &  r^{(i)}_{t}(\theta) \hat{A_i},\, 
                \text{CLIP}\left(r^{(i)}_{t}(\theta), 1 - \epsilon, 1 + \epsilon\right) \hat{A_i}
            \Big] - \beta D_{\text{KL}}(\pi_{\theta} | \pi_{\text{ref}})
    \Bigg\},
\end{aligned}
\label{eq:eq_2}
\end{equation}
}
\noindent where $r^{(i)}_{t}(\theta) = \frac{\pi_{\theta}(o_i \mid q,o_{<t}^{(i)})}{\pi_{\theta_{\text{old}}}(o_i \mid q,o_{<t}^{(i)})}$ serves as an importance sampling ratio that corrects the gradient estimation according to policy gradient theory~\citep{sutton1999policy}.

$\epsilon$ and $\beta$ are hyperparameters controlling the clipping range and the KL divergence regularization, respectively. These constraints help maintain the updated policy $\pi_{\theta}$ within a stable region around the previous policy $\pi_{\theta_{\text{old}}}$, effectively preventing abrupt policy shifts.
This formulation characterizes an on-policy reinforcement learning setup, where optimization relies on samples generated from a distribution closely aligned with the current policy.

\subsection{Rubric-based Fine-Grained Rewards}
\label{sec:rubric_reward}

While rubric-based rewards have been adopted in certain RL frameworks~\citep{viswanathan2025checklists, gunjal2025rubrics, jayalath2025compute}, unconstrained rubric design often risks reward hacking or misalignment with true reasoning task objectives~\citep{skalse2022defining, gao2023scaling}.
To address this, we design two complementary types of rubrics tailored for complex, multi-domain reasoning tasks:

\vspace{-1mm}
\begin{itemize}
    \item \textbf{Factual Criteria:} Verify the correctness of intermediate reasoning, sub-answers, and final results.  
    \item \textbf{Process Criteria:} Evaluate whether the reasoning process follows essential steps and valid logic.
\end{itemize}
\vspace{-1mm}
We generate the rubrics in a two-step process using an expert LLM, \textit{OpenAI O3}~\citep{jaech2024openai}. 
First, for each question $q$, we prompt \textit{O3} to generate a high-quality reference answer $a_{\text{ref}}$ that contains a comprehensive solution, including detailed reasoning traces and intermediate verification outcomes. 
We then filter out any cases where the reference solution’s final answer is judged incorrect, ensuring the reliability of the retained references.
Second, we instruct \textit{O3} to generate the fine-grained rubrics $\mathcal{C}=\{c_k\}^{|\mathcal{C}|}_{k=1}$ by conditioning on both the original question $q$ and the generated $a_{\text{ref}}$. This grounding ensures the rubrics accurately reflect the correct reasoning process and sub-answers required for the question. 
Each criterion $c_k$ is composed of a descriptive specification $d_k$ and and an adaptive weight $w_k$, i.e., $c_k=(d_k, w_k)$.
A simple example is illustrated in Figure \ref{fig:frame_work} (a), with prompt for rubric generation in Appendix~\ref{sec:rubric_gen_prompt}.

For each criterion $c_k\in\mathcal{C}$ associate with question $q$, we evaluate the model output $o_i$ using a judge model that produces a binary verification score:
\begin{equation}
s_k(q,o_i) =
\begin{cases}
1, &\text{if response } o_i \text{ satisfies } d_k \text{ given prompt } q, \\
0, &\text{otherwise.}
\end{cases}
\end{equation}

The binary score $s_k(q,o_i) \in \{0, 1\}$ effectively mitigates the risk of reward hacking while maintaining interpretability.
We then aggregate all rubric scores along with their corresponding weights \{($w_{k},s_{k}$)\}$^{|\mathcal{C}|}_{k=1}$ into a normalized scalar reward:
\begin{equation}
R(q,o_i) =
\frac{\sum_{k=1}^{n} w_{k} \cdot s_{k}(q, o_i)}
{\sum_{k=1}^{n} w_{k}},
\label{eq:rubric-reward_comp}
\end{equation}

To allow the policy to produce correct answers even when its reasoning process deviates from the predefined rubrics, we consider the reward to be fully satisfied if all factual criteria $\mathcal{C}_i^{\text{fact}}$, including both final and intermediate sub-answers, are verified. 
Process rewards are considered only when factual verification fails, allowing the model to explore reasoning trajectories beyond a single reference path.
The final reward for a given query-answer pair $(q,o_i)$ is then computed as: 
\begin{equation}
r_i =
\begin{cases}
1, &\sum^{|\mathcal{C}_i^\text{fact}|}_{k=1}{s_{k}}(q,o_i)=|\mathcal{C}_i^\text{fact}| \text{ where } c_{k} \in \mathcal{C}_i^{\text{fact}},\\
R(q,o_i), &\text{otherwise.}
\end{cases}
\label{eq:rubric-reward}
\end{equation}

Separating factual correctness and essential reasoning steps into distinct rubrics simplifies the evaluation process for the judge model, thereby enhancing the reliability of LLM-as-a-judge assessments.
The rubric-based binary verification further provides both reliable and dense reward signals, striking a balance between mitigating reward hacking and reducing intra-group variance. 
Incorporating this reward into Eq. (\ref{eq:eq_1} - \ref{eq:eq_2}) for online RL, as shown in Section~\ref{sec:pre_reward_comp}, leads to consistent performance gains over alternative reward formulations.

\subsection{Rubric-Guided Offline Exploration}
\label{sec:off-policy}
Recent studies~\citep{yan2025learning,zhang2025critique} have demonstrated the potential of off-policy guidance in enhancing reinforcement learning.
Building on the effectiveness of {Rubric-Guided Self-Refinement} in improving model reasoning during test time (Section \ref{sec:pre_self_refine}),
we further incorporate rubrics as an off-policy guidance signal into the original GRPO framework to expand the model’s exploration boundary.
Specifically, our RGR-GRPO framework consists of three steps, as illustrated in Figure~\ref{fig:frame_work} (b):

\paragraph{Step 1: Exploration Assessment.}
In each GRPO training iteration, we first sample $G-1$ initial responses $\{o_i\}_{i=1}^{G-1}$ for each question $q$ from the old policy $\pi_{\theta_{\text{old}}}$, while reserving the last rollout for adaptive exploration adjustment. 
Each response is evaluated by the rubric-based reward function (Section~\ref{sec:rubric_reward}), producing both the aggregated reward and detailed criterion judgments:
\begin{equation}
r_i,\ \{s_k(q,o_i)\}_{k=1}^{|\mathcal{C}_i|} \leftarrow \text{Reward}(q,o_i),\quad \forall i \in [1, G-1].
\end{equation}

The goal of off-policy guidance is to overcome the exploration limitations of purely on-policy updates.
If the current group of responses already contains a perfect solution, additional off-policy exploration is unnecessary, which helps prevent excessive distributional shift and subsequent training collapse~\citep{zhang2025critique}.
Specifically, we locate the best-performing response $o_{\text{best}}$ as:
\begin{equation}
o_{\text{best}} = \operatorname*{arg\,max}_{o_i \in \{o_1, \dots, o_{G-1}\}} r_i,
\end{equation}
We then determine the update strategy according to whether $o_{\text{best}}$ meets all rubric criteria:
\vspace{-2mm}
\begin{itemize}[leftmargin=1.5em]
    \item \textbf{If} $\sum_{k=1}^{|\mathcal{C}_i|} s_k(q,o_i) = |\mathcal{C}_i|$:  
    the on-policy exploration suffices, so we generate the final response $o_G$ and update the policy via Eq. (\ref{eq:eq_1} - \ref{eq:eq_2}).
    \vspace{-2mm}
    \item \textbf{Otherwise:}  
    the policy fails to reach a perfect solution, and subsequent mix-policy refinement is applied.
\end{itemize}
\vspace{-2mm}
Exploration Assessment (EA) can determine whether off-policy guidance is needed based on the current group’s exploration upper bound, effectively avoiding unnecessary off-policy updates and reducing the risk of entropy explosion.
A more detailed proof and analysis are provided in Appendix~\ref{sec:proof}.

\paragraph{Step 2: Rubric-Based Self-Refinement.}
To enhance the upper bound of the current exploration group, we refine the best response $o_{\text{best}}$ by explicitly conditioning on its failed rubric items.
Inspired by {Critique-GRPO}~\citep{zhang2025critique}, we prompt the policy model with the triplet $(q, o_{\text{best}}, \mathcal{C}^{\text{failed}})$, where
\begin{equation}
\mathcal{C}^{\text{failed}} = \{c_k \mid s_k(q, o_{\text{best}})=0\}
\end{equation}
denotes the set of unsatisfied criteria.
Each failed criterion $c_k \in \mathcal{C}^{\text{failed}}$ is concatenated in order, and a self-refining template $T_{\text{refine}}$ (Appendix~\ref{sec:refine_prompt}) is used to generate a refined response:
\begin{equation}
o_G \sim \pi_{\theta_{\text{old}}}(\cdot \mid q, o_{\text{best}}, \mathcal{C}^{\text{failed}}),
\end{equation}

and its corresponding reward $r_G$ is computed accordingly.

\paragraph{Step 3: Mix-Policy GRPO.}
Finally, we merge the off-policy refined rollout with the initial on-policy rollouts to jointly update the policy.
The advantage estimation still follows Eq. (\ref{eq:eq_1}), covering all responses in the batch.
The model is then optimized under a mixed-policy objective adapted from GRPO:
\begin{equation}
\begin{aligned}
&\mathcal{J}_{\text{RGR-GRPO}}(\theta) =
\mathbb{E}_{q \sim \mathcal{D},\ \{o_i\}_{i=1}^{G-1} \sim \pi_{\text{old}},\
o_G \sim \pi_{\text{old}}(\cdot \mid q, o_{\text{best}}, \mathcal{C}^{\text{failed}})} 
\\
&\frac{1}{G} \Bigg[\underbrace{\sum_{i=1}^{G-1} \sum_{t=1}^{|o_i|} r_{t}^{(i)}(\theta) \hat{A_i}}_{\text{on-policy objective}}
+
\underbrace{\sum_{t=1}^{|o_G|} 
f_{shape}\big(r_{t}^{(G)}(\theta)\big) \hat{A}_G}_{\text{off-policy objective}} \Bigg],
\end{aligned}
\label{eq:RGR-GRPO}
\end{equation}
Here, $r^{(i)}_{t}(\theta) = \frac{\pi_{\theta}(o_i \mid q,o_{<t}^{(i)})}{\pi_{\theta_{\text{old}}}(o_i \mid q,o_{<t}^{(i)})}$
is the normal importance sampling ratio as defined in Eq. (\ref{eq:eq_2}).  
The off-policy refinement term is further modulated by a shaping function $f(\cdot)$~\citep{yan2025learning} to adjust the contribution of each token in off-policy refined response $o_G$:
\begin{equation}
f_{shape}\big(r_{t}^{(G)}(\theta)\big) =
\frac{\pi_{\theta}(o_G \mid q,o_{<t}^{(G)})}{
\pi_{\theta}(o_G \mid q,o_{<t}^{(G)}) + \gamma}.
\end{equation}
where $0 < \gamma < 1$. The shaping function reweights the gradients by assigning higher importance to 
low-probability tokens in the refined trajectories, 
encouraging the model to learn from successful but out-of-distribution behaviors 
while mitigating the impact of failed refinements.

Overall, RGR-GRPO effectively balances on-policy stability with off-policy exploration flexibility, leading to more robust and generalizable policy optimization.

\begin{table*}[t]
\centering
\label{tab:subject_general}
\setlength{\tabcolsep}{3pt}
\renewcommand{\arraystretch}{1.15}
\resizebox{\textwidth}{!}{
\begin{tabular}{l|ccc|cc|cc|ccccc|c}
\toprule
\multirow{2}{*}{\textbf{Model}}
& \multicolumn{3}{c|}{\textbf{Math}} 
& \multicolumn{2}{c|}{\textbf{Physics}} 
& \multicolumn{2}{c|}{\textbf{Chemistry}} 
& \multicolumn{5}{c|}{\textbf{General}} 
& \multirow{2}{*}{\textbf{AVG.}} \\
\cmidrule(lr){2-4} \cmidrule(lr){5-6} \cmidrule(lr){7-8} \cmidrule(lr){9-13}
& \textbf{MATH} & \textbf{MATH500} & \textbf{SMath} 
& \textbf{PIQA} & \textbf{SPhys} 
& \textbf{Chem} & \textbf{SChem} 
& \textbf{MMLU} & \textbf{MMLU$^{+}$} & \textbf{GPQA$^{*}$} & \textbf{GPQA} & \textbf{OLY}
& \\
\midrule
\multicolumn{14}{c}{\textbf{\textit{Qwen2.5-3B}}} \\
\midrule
Base-Model            & 59.0 & 49.2 & 32.0 & 79.9 & 18.9 & 36.1 & 17.3 & 57.1 & 30.0 & 24.3 & 26.7 & 10.9 & 36.8 \\
Instruct-Model        & 63.8 & 53.6 & \underline{49.0} & \textbf{83.0} & 22.5 & 38.5 & 26.3 & \textbf{66.3} & \textbf{42.3} & 28.3 & 30.4 & \textbf{23.7} & 43.9 \\
\midrule
Outcome-GRPO$^{\dagger}$                & 64.3 & 55.7 & 44.9 & 79.7 & 26.0 & \textbf{40.9} & \underline{30.1} & 61.3 & \underline{39.8} & 27.3 & 25.2 & 20.1 & 42.9 \\
Likert-GRPO$^{\dagger}$                 & \underline{64.5} & 54.8 & \underline{49.0} & 78.5 & \underline{28.8} & 39.2 & 28.9 & 62.6 & 37.8 & \underline{29.8} & 28.6 & 18.7 & 43.4 \\
Rubric-GRPO$^{\dagger}$                & 63.4 & \underline{56.0} & 47.6 & \underline{80.6} & 27.9 & 39.7 & 28.2 & 63.2 & 38.4 & 27.8 & 29.6 & 20.3 & \underline{43.6} \\
\midrule
Critique-GRPO$^{\ddagger}$         & 61.5 & 52.4 & 43.5 & 79.3 & 24.2 & 38.2 & 26.2 & 57.5 & 33.7 & 25.3 & \textbf{31.9} & 17.3 & 40.9 \\
LUFFY$^{\ddagger}$                 & 62.3 & 54.4 & 41.5 & 79.5 & 24.7 & 38.8 & 25.6 & 60.3 & 34.3 & 27.8 & 29.5 & 18.4 & 41.4 \\
\midrule
\rowcolor{cellHighlight} 
RGR-GRPO$^{\ddagger}$ (Ours)                  & \textbf{66.3} & \textbf{57.0} & \textbf{50.8} & 80.2 & \textbf{31.8} & \underline{40.3} & \textbf{31.6} & \underline{64.5} & \underline{39.8} & \textbf{30.8} & \underline{30.9} & \underline{21.5} & \textbf{45.5} \\
\midrule
\multicolumn{14}{c}{\textbf{\textit{Qwen2.5-7B}}} \\
\midrule
Base-Model            & 64.0 & 52.6 & 41.5 & 84.1 & 22.9 & 44.5 & 29.7 & 69.7 & 45.0 & 29.3 & 31.0 & 20.9 & 44.6 \\
Instruct-Model        & 72.8 & \underline{65.0} & 58.5 & \textbf{86.6} & 33.9 & 46.6 & \underline{40.2} & \underline{73.9} & \underline{55.9} & 32.3 & 33.5 & \underline{28.4} & 52.3 \\
\midrule
Outcome-GRPO$^{\dagger}$                & 73.2 & 61.8 & 61.2 & \textbf{86.6} & 38.3 & 47.0 & 38.3 & 73.3 & 52.2 & 35.4 & 32.7 & 27.3 & 52.3 \\
Likert-GRPO$^{\dagger}$                 & 73.5 & 62.7 & 61.5 & 84.7 & 44.5 & 47.3 & 39.1 & 71.5 & 52.0 & 34.8 & 32.6 & \underline{28.4} & 52.7 \\
Rubric-GRPO$^{\dagger}$                 & \underline{73.9} & 63.8 & \underline{65.3} & 84.6 & \textbf{45.8} & \underline{47.9} & \underline{40.2} & 71.0 & 53.9 & \underline{36.4} & \underline{35.5} & 26.2 & \underline{53.7} \\
\midrule
Critique-GRPO$^{\ddagger}$ \ \   \     & 71.4 & 64.8 & 61.2 & 84.8 & 36.6 & 45.7 & 38.7 & 71.5 & 49.1 & 27.8 & 29.9 & 25.6 & 50.6 \\
LUFFY$^{\ddagger}$                 & 72.2 & 61.6 & 60.5 & 85.2 & 35.2 & 46.4 & 38.7 & 70.4 & 49.8 & 29.8 & 31.5 & 24.8 & 50.5 \\
\midrule
\rowcolor{cellHighlight} 
RGR-GRPO$^{\ddagger}$ (Ours)                  & \textbf{75.2} & \textbf{66.8} & \textbf{67.9} & \underline{86.3} & \underline{45.4} & \textbf{48.8} & \textbf{43.7} & \textbf{74.3} & \textbf{56.7} & \textbf{38.9} & \textbf{36.7} & \textbf{28.9} & \textbf{55.8} \\
\bottomrule
\end{tabular}
}
\vspace{-6pt}
\caption{Performance (\%) on subject-specific and general-domain benchmarks.
On-policy and hybrid off-policy RL methods are denoted by $^{\dagger}$ and $^{\ddagger}$, respectively.
The best result for each metric is in \textbf{bold}, and the second best is {underlined}.}
\label{tab:main}
\vspace{-10pt}
\end{table*}

\section{Experimental Setup}
\label{sec:setup}


\paragraph{Datasets Construction.}
To compare the effectiveness of multi-domain reasoning rewards, we construct our training set based on \textit{WebInstruct-Verify}~\citep{ma2025general}, which is a large-scale, multi-domain, and verifiable dataset covering diverse subjects such as physics, chemistry, social sciences, and finance.
We randomly sample a subset and filter prompts by removing those with excessively long reference answers or overly simple examples.
The final dataset contains approximately 10k samples used for both RL training and validation.
Detailed statistics are provided in Appendix~\ref{sec:data_stat}.

\vspace{-4mm}

\paragraph{Evaluation.}
We evaluate model performance across two categories: {specific-subject} and {general reasoning} tasks.
The {specific-subject} evaluation focuses on three core scientific disciplines---mathematics, physics, and chemistry---which best reflect the model's reasoning ability within independent scientific domains.
Each subject includes multiple benchmark datasets.
The {general reasoning} category covers broader benchmarks, including {MMLU}~\citep{hendrycks2020measuring}, {MMLU-Pro (MMLU$^{+}$)}~\citep{wang2024mmlu}, {GPQA}, {GPQA-Diamond (GPQA$^{*}$)}~\citep{rein2024gpqa}, and {OlympicArena-Valid (OLY)}~\citep{huang2024olympicarena}.
We follow ~\citet{fan2025megascience} and adopt the \textit{Open-Science-Evaluation} framework for all tasks evaluation.
During evaluation, we use greedy-decoding (temperature=0) and shuffle multiple-choice options to avoid contamination.
Further details are provided in Appendix~\ref{sec:eval_benchmarks}.

\vspace{-4mm}

\paragraph{Baselines.}
We compare three reward mechanisms under the on-policy GRPO framework:
(1) \textit{Outcome-GRPO}, where the reward is computed through binary verification based on the final answer;
(2) \textit{Likert-GRPO}, which provides a dense reward by comparing the model output against the reference answer;
and (3) \textit{Rubric-GRPO}, where the reward is computed by aggregating the verification results from question-specific rubrics (Section~\ref{sec:rubric_reward}).
For \textit{Outcome-GRPO}, we use \textit{Math-Verify}\footnote{https://github.com/huggingface/Math-Verify} to extract and compute Outcome rewards.
For both the \textit{Likert-} and \textit{Rubric-GRPO} settings, we employ \textit{Qwen3-32B} as the judge model.
For off-policy strategies, we include two baselines:
{LUFFY}~\citep{yan2025learning}, which directly mixes offline supervised responses (from \textit{OpenAI-O3}) with on-policy rollouts;
and {Critique-GRPO}~\citep{zhang2025critique}, which leverages ground-truth-based critiques to guide policy refinement, shows more stable and stronger performance than the variant using critiques from a strong model under our setting.

\vspace{-5mm}

\paragraph{RL Implementation.}
All baselines and our proposed {RGR-GRPO are implemented using the \textit{Verl} framework~\citep{sheng2025hybridflow}.
Training is conducted for 400 steps with a batch size of 96 and a learning rate of $1\times10^{-6}$.
We use temperature = 1.0 for rollout generation and sample 8 rollouts per prompt.
We also introduce a length penalty~\citep{liu2025understanding} to discourage overly long reasoning: $r_i = r_i - \lambda (L_i - L^*)$, with $\lambda = 1\times10^{-4}$ and $L^* = 2\text{k}$.
For on-policy baselines, we follow \textit{Verl}’s default GRPO configuration with 8 on-policy rollouts.
For {RGR-GRPO} and all off-policy baselines, we use 1 off-policy and 7 on-policy rollouts to ensure fair comparison.
Following prior work, we set the shaping coefficient $\gamma = 0.1$ and apply an entropy coefficient of 0.01 to encourage exploration.
To enable more flexible policy updates, we remove both the clipping function for probability ratios and the KL-divergence constraint by setting $\beta = 0$ from the original GRPO formulation, thereby encouraging more substantial model adaptation and more effective learning from refinement signals.
In addition, recent studies~\citep{liu2025understanding} suggest that token-level normalization and standard-deviation scaling in advantage estimation can introduce biased optimization; we thus omit these components to ensure a more stable and unbiased objective.

\section{Experimental Results}
\subsection{Main Results}

We conduct 400-step RL training on both the \textit{Qwen2.5-3B} and \textit{Qwen2.5-7B} base models, saving checkpoints every 40 steps for evaluation. 
For each method, we report the best checkpoint performance in Table~\ref{tab:main}. 
Under the pure on-policy setting, the Rubric-GRPO---whose rubrics consist of Factual and Process criteria---achieves the highest average performance, demonstrating that our rubric design provides reliable and verifiable dense rewards for effective RL optimization. 
In contrast, the off-policy baselines Critique-GRPO and LUFFY fail to achieve consistent improvements, yielding smaller gains than the on-policy methods. 

We further observe that both methods exhibit instability during training, with entropy explosions, which we attribute to excessive distributional shifts caused by divergent off-policy data (see Appendix~\ref{sec:stability_analysis} for analysis). 
In comparison, our RGR-GRPO maintains stable hybrid-policy training guided by explicit rubrics and consistently delivers the best results. 
On the 7B model, RGR-GRPO outperforms the base model by an average of 25.1\%, the official instruct model by 6.7\%, and the second-best on-policy Rubric-GRPO by 3.7\%. 
These results demonstrate that incorporating rubric-guided off-policy refinements significantly improves exploration efficiency and leads to the best overall performance.

\subsection{Out-of-distribution Performance}
\label{sec:ood}

We further evaluate the out-of-distribution (OOD) performance. Figure~\ref{fig:ood} presents results on the MedMCQA~\citep{pal2022medmcqa} and CS-Bench~\citep{song2024cs} datasets.
Although medical and computer science data account for only a negligible fraction of our training corpus (less than 1\%), all methods show clear improvement after RL training.
Among them, our RGR-GRPO achieves the best OOD performance, revealing its strong potential for generalization.

\begin{figure}[t]
    \vspace{-4pt}
    \centering
    \centerline{\includegraphics[width=0.95\linewidth]{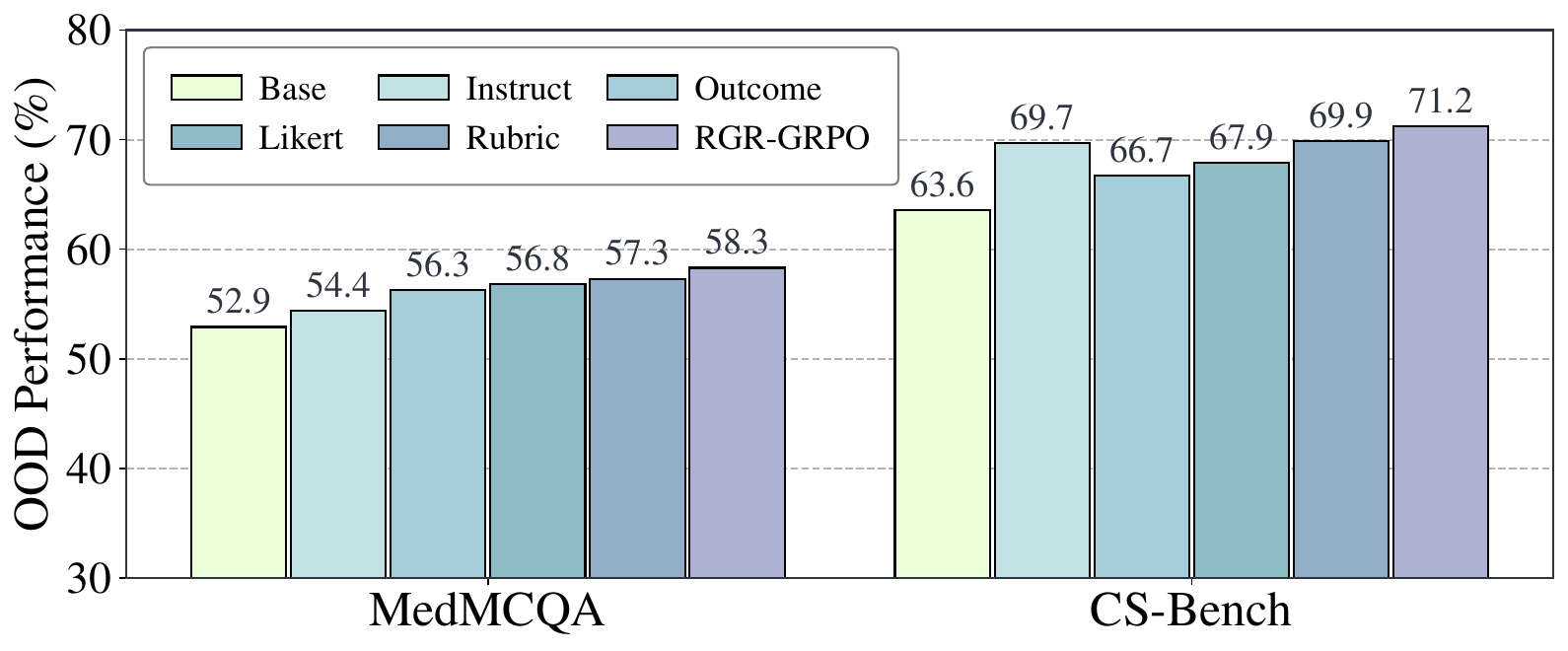}}
    \vspace{-8pt}
    \caption{Comparison of out-of-distribution (OOD) performance (\%) on the MedMCQA and CS-Bench datasets.}
    \label{fig:ood}
    \vspace{-18pt}
\end{figure}

\begin{figure}[t]
    \centering
    \centerline{\includegraphics[width=1.02\columnwidth]{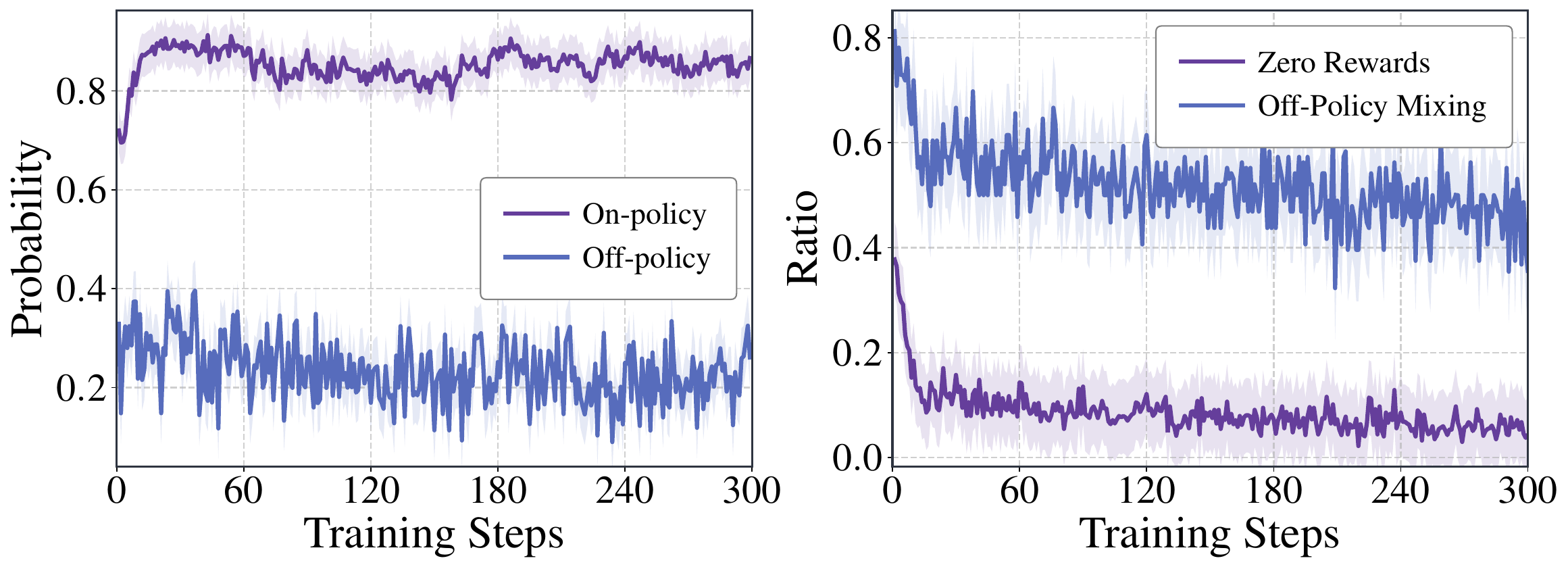}}
    \vspace{-6pt}
    \caption{Distribution and average of file / function match rate and resolved rate on SWE-Bench Lite LeaderBoard.}
    \label{fig:ratio_prob}
    \vspace{-18pt}
\end{figure}

\begin{figure*}[t]
    \centering
    \centerline{\includegraphics[width=\linewidth]{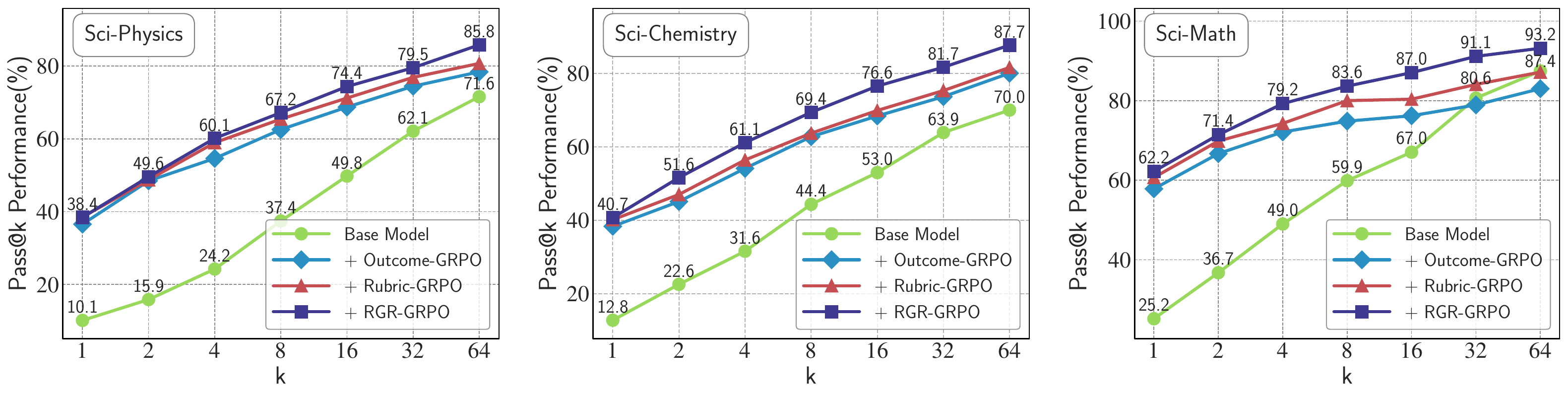}}
    \vspace{-6pt}
    \caption{Pass@\textit{k} performance (\%) of Qwen2.5-7B across physics, chemistry, and math subjects in Sci-Bench.} 
    \label{fig:pass@k}
    \vspace{-12pt}
\end{figure*}

\subsection{Analysis of Pass@\textit{k} Curves Across Subjects}

Pass@\textit{k} measures the probability that at least one out of $k$ independently sampled responses is correct. We use it to analyze how reinforcement learning (RL) expands the reachable reasoning space across different scientific domains.
We set the sampling temperature to 1.0 and evaluate \textit{Qwen2.5-7B-Base} by sampling $k$ responses on the SciBench physics, chemistry, and math subsets.
As shown in Figure~\ref{fig:pass@k}, all baselines exhibit substantial improvements as $k$ increases, while RL-trained models already outperform the base model substantially at Pass@1.
As $k$ grows, the performance gap gradually narrows, and the base model even surpasses the standard on-policy R1-GRPO baseline at $k=64$ on the Sci-Math subset.
This phenomenon aligns with the observation of \citet{cheng2025revisiting}, suggesting that mathematical gains primarily stem from leveraging pre-trained knowledge rather than introducing fundamentally new reasoning abilities.
Current on-policy RL formulations mainly sharpen the model’s existing capabilities rather than expanding its reasoning horizon~\citep{setlur2025e3,yue2025does}.

In contrast, our RGR-GRPO breaks this limitation by incorporating reliable off-policy guidance through rubric-based supervision.
It achieves consistently superior Pass@\textit{k} performance across all subjects, and its improvement persists more steadily as $k$ increases, demonstrating a stronger ability to promote effective policy exploration and reasoning diversity.

\begin{figure}[t]
    \centering
    \centerline{\includegraphics[width=\columnwidth]{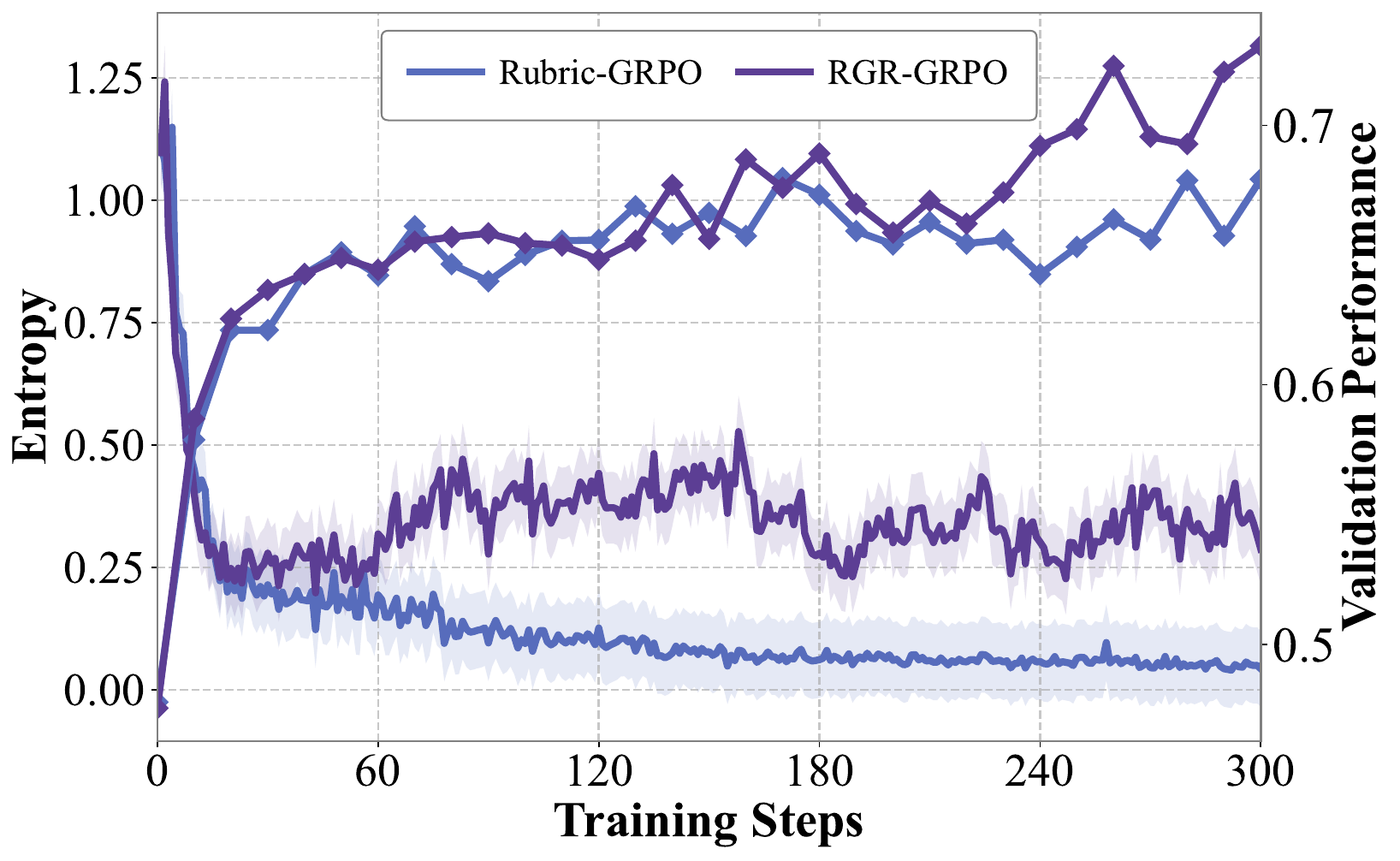}}
    \vspace{-8pt}
    \caption{Comparison of entropy and validation between on-policy Rubric-GRPO and off-policy RGR-GRPO.}
    \label{fig:entropy_valid}
    \vspace{-18pt}
\end{figure}

\subsection{Policy Exploration in GRPO with Rubric Guidance}

We further analyze the policy exploration dynamics during GRPO training for our RGR-GRPO method.
As shown in Figure~\ref{fig:ratio_prob}, the proportion of zero-reward responses decreases steadily as training progresses, while the mixture ratio of off-policy data gradually declines.
This indicates that, in \textit{Step1: Exploration Assessment}, on-policy responses increasingly contribute to the best-performing samples.
In addition, the importance-sampling probabilities of both on-policy and off-policy rollouts remain stable throughout training, suggesting that the policy-mixing strategy is well balanced and does not collapse.

Furthermore, Figure~\ref{fig:entropy_valid} presents the evolution of training entropy and validation performance.
Compared with the on-policy Rubric-GRPO, which exhibits rapid entropy collapse and fast convergence, our RGR-GRPO shows a distinct entropy trajectory: it drops sharply in the early steps and then fluctuates between 0.2 and 0.4, indicating sustained and effective reasoning exploration.
As the on-policy improvement saturates in the later stages, RGR-GRPO continues to enhance performance through off-policy rubric guidance, effectively breaking the exploration bottleneck.

\begin{table}[t]
\centering
\renewcommand{\arraystretch}{0.9}
\begin{tabular}{lc}
\toprule
\textbf{Ablation Setting} & \textbf{Average Score} \\
\midrule
Qwen2.5-7B-Base & 44.6 \\
+ Rubric-GRPO (Fact-Only Rubrics) & 53.5 \\
+ Rubric-GRPO (All Rubrics) & 53.7 \\
+ RGR-GRPO (w/o EA) & 53.8 \\
+ RGR-GRPO (w/o Shaping) & 54.5 \\
+ RGR-GRPO (Fact-Only Rubrics) & 55.2 \\
\midrule
+ \textbf{RGR-GRPO (Full)} & \textbf{55.8} \\
\bottomrule
\end{tabular}
\vspace{-4pt}
\caption{Ablation results on average performance.}
\label{tab:ablation}
\vspace{-18pt}
\end{table}

\subsection{Ablation Study}

We conduct an ablation study on \textit{Qwen2.5-7B} to analyze the impact of different rubric categories and the configurations of off-policy shaping and \textit{exploration assessment (EA)}, as shown in Table~\ref{tab:ablation}.
The results demonstrate the essential contribution of each component to the overall performance.

\begin{figure*}[t]
    \centering
    \centerline{\includegraphics[width=1.00\linewidth]{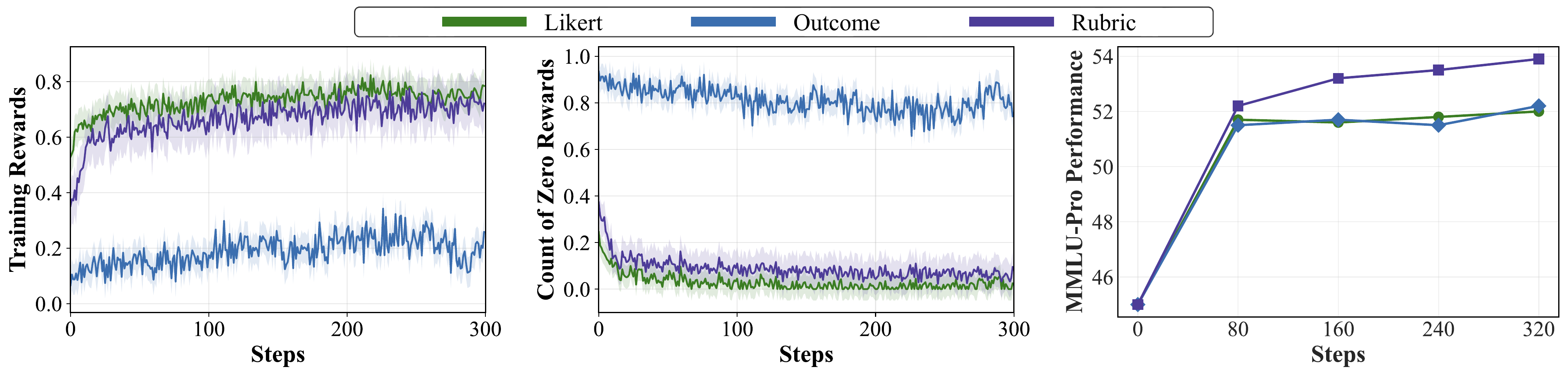}}
    \vspace{-10pt}
    \caption{Performance dynamics of RL training under different reward settings.}
    \label{fig:reward_comp}
    \vspace{-16pt}
\end{figure*}

\section{Exploring the Roles of Rubrics in Reward Enrichment and Self-Refinement}

In this section, we analyze the roles of rubrics in reward provision and self-refinement in isolation within the dynamic RL training process, which also provides a clear motivation for our RGR-GRPO framework.

\subsection{Rubric-Based Dense and Effective Rewards}
\label{sec:pre_reward_comp}

To further evaluate the impact of rubric-based rewards in our RGR-GRPO framework, we compare GRPO under three reward mechanisms during online training, using the setup from Section~\ref{sec:setup}.
Figure~\ref{fig:reward_comp} illustrates the dynamic training behavior of ~\textit{Qwen2.5-7B-Base} under these rewards.
Compared with the others, the Outcome reward yields a notably lower average reward due to its sparse 0–1 feedback. 
The large proportion of zero rewards indicates that most sampled groups lack informative signals, preventing the model from effective learning.
Although the Likert reward provides dense feedback, it tends to be noisy and overly dependent on the judge model’s preference, leading to unstable optimization and mediocre performance on MMLU-Pro~\citep{wang2024mmlu}.
In contrast, the Rubric-based reward decomposes evaluation into explicit criteria, allowing the judge model to assign reliable binary decisions for each aspect and aggregate them into a dense and interpretable reward. 
This design enables dense and effective reinforcement learning, resulting in steady performance improvement and the best outcomes on MMLU-Pro.

\subsection{Rubric-Guided Self-Refinement at Test Time}
\label{sec:pre_self_refine}

\begin{figure}[t]
    \centering
    \centerline{\includegraphics[width=\columnwidth]{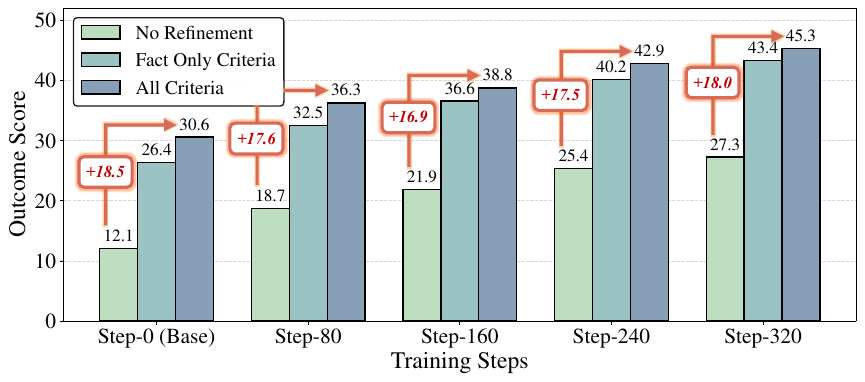}}
    \vspace{-8pt}
    \caption{Performance improvements from rubric-based self-refinement across different RL training stages.}
    \label{fig:self_refinement}
    \vspace{-18pt}
\end{figure}

Rubric-guided self-refinement can enhance the quality of model generations~\citep{cook2024ticking}. 
To further explore its impact on complex reasoning in RL training, we evaluate checkpoints from the Outcome-GRPO training process described in Section \ref{sec:pre_reward_comp}. 
Specifically, we test the reasoning accuracy across different training steps using 4K additional multi-domain samples from \textit{WebInstruct-Verify}, with Math-Verify used for evaluation. We compare three settings:
\textbf{No Refinement}: direct generation without rubric guidance;
\textbf{Fact-Only Criteria}: refinement prompted with factual rubrics only;
\textbf{All Criteria}: refinement guided by both Factual and Process rubrics.
For the latter two settings, we augment the model prompt with \textit{“Keep in mind the following criteria:”} followed by the corresponding rubric instructions.

Experimental results in Figure~\ref{fig:self_refinement} show consistent and substantial improvements with Rubric-Guided Self-Refinement. 
As training progresses and the model’s inherent reasoning ability improves, the gain from rubric-based guidance remains stable rather than diminishing. 
This highlights the sustained benefits of rubrics as offline guidance throughout the reinforcement learning process: they not only enhance reasoning robustness during policy rollouts but also show potential to improve exploration when the on-policy learning signal becomes saturated.

\section{Related Work}

\paragraph{Rubric-based Approaches.}
Rubric-based evaluation provides structured and interpretable supervision by verifying each criterion separately and aggregating the results~\citep{arora2025healthbench,galvan2025rubrik,winata2025datasheets}.
Recent studies have used rubrics to compute fine-grained rewards for RL training~\citep{team2025kimi,gunjal2025rubrics, viswanathan2025checklists}, which are particularly useful for tasks without definitive ground-truth answers.
Some recent efforts further integrate rubric-guided signals into the rollout process to enhance policy learning~\citep{zhou2025breaking,jayalath2025compute}.
However, unconstrained rubric design in complex reasoning tasks can lead to reward hacking or conflicting objectives~\citep{eisenstein2023helping,fu2025reward}.
To ensure stable and verifiable supervision, we restrict rubrics to two orthogonal dimensions—\textit{factual} and \textit{process}—capturing both correctness and reasoning quality while preserving generality across domains.

\vspace{-10pt}
\paragraph{Off-policy Guided Exploration.}
On-policy RL often suffers from limited exploration and entropy collapse~\citep{wu2025invisible}.
While recent solutions like prolonged training~\citep{liu2025prorl}, entropy-based regularization~\citep{dong2025rl,zheng2025first} and external guidance~\citep{zhang2025merf,rrv2025thinktuning} can partially mitigate these issues, their effects remain limited when the exploration boundary is inherently constrained by the on-policy distribution.
Recent studies introduce off-policy guidance to expand exploration by leveraging external responses or heuristic rollouts~\citep{yan2025learning,zhang2025critique,zhou2025breaking}.
However, in complex reasoning domains, unconstrained off-policy rollouts often lead to instability, such as entropy explosion or semantic drift.
Our RGR-GRPO addresses these challenges through a combination of \textit{exploration assessment}, rubric-constrained guidance, and adaptive length penalties, which jointly preserve stability and diversity—allowing RGR-GRPO to explore beyond the on-policy limit while maintaining robust learning dynamics.

\section{Conclusion}

In this work, we propose RGR-GRPO, a rule-driven reinforcement learning framework for multi-domain reasoning.
RGR-GRPO employs fine-grained rewards derived from rubrics composed of \textit{factual} and \textit{process} criteria, and performs self-refinement by analyzing failed criteria from the best on-policy rollouts.
Evaluated across 14 benchmarks spanning multiple domains, RGR-GRPO consistently outperforms existing RL baselines.
Notably, it maintains stable entropy fluctuations during training and achieves superior pass@k performance, demonstrating both sustained exploration and effective bottleneck breakthroughs.

\section*{Acknowledgements}

We thank Professor Sherry Tongshuang Wu for her valuable guidance and insightful feedback throughout the development of this work.

\nocite{langley00}

\bibliography{example_paper}
\bibliographystyle{icml2025}

\newpage
\appendix
\onecolumn

\section{Experimental Details}
\label{sec:exp_details}

\subsection{Datasets}
\label{sec:data_stat}

\begin{wraptable}{r}{0.35\textwidth}
\centering
\renewcommand{\arraystretch}{1.1}
\setlength{\tabcolsep}{6pt}
\begin{tabular}{lcc}
\toprule
\textbf{Rubric Type} & \textbf{Count} & \textbf{Ratio} \\
\midrule
Factual & 13{,}809 & 27.8\% \\
Process & 35{,}926 & 72.2\% \\
\bottomrule
\end{tabular}
\caption{Distribution of rubric types across all training and validation examples.}
\label{tab:rubric_type_distribution}
\end{wraptable}
We use the \textit{WebInstruct-Verify}~\citep{ma2025general} dataset to train GRPO. \textit{WebInstruct-Verify} covers multiple disciplines and provides verified ground-truth answers, which allows fair comparison with outcome-based reward baselines. We randomly sample subsets and used \textit{Qwen2.5-7B-Base} to generate initial responses, then filtered out simple questions using Likert scores. For each prompt we produce a reference answer with \textit{O3} and remove prompts whose reference answers were excessively long. This yielded about 10k high-quality, multi-domain training examples; the subject distribution is reported in Table~\ref{tab:subject_distribution}, with the majority concentrated in mathematics, physics, and chemistry. Although Computer Science and Health constitute only small fractions of the data, they nevertheless perform well in their respective evaluations (see Section~\ref{sec:ood}). From each prompt and its reference answer we derive two rubric types (Factual and Process) and their counts are summarized in Table~\ref{tab:rubric_type_distribution}.

\subsection{Training}

We train GRPO on \textit{Qwen2.5-Base} using the \textit{Verl}~\footnote{https://github.com/volcengine/verl} framework, which employs \textit{vLLM}~\footnote{https://github.com/vllm-project/vllm} as the rollout generator. 
The detailed hyperparameter settings are provided in Table~\ref{tab:hyperparameters} for both on-policy methods (outcome-, Likert-, and rubric-GRPO) and off-policy methods (Critique-GRPO~\citep{zhang2025critique}, LUFFY~\citep{yan2025learning}, and our RGR-GRPO). Specifically, training is conducted on 8 H100 GPUs, while the LLM-as-Judge reward service is deployed across multiple machines equipped with 4 L40S GPUs each.

\begin{table}[t]
\centering
\small
\renewcommand{\arraystretch}{1.3}
\setlength{\tabcolsep}{3pt}
\begin{tabular}{lcccccccccccccccc}
\toprule
\textbf{Subject} & Mathematics & Physics & Chemistry & Business & Finance & Economics & History & Biology & Psychology & Health & CS. & Other \\
\midrule
\textbf{Count} & 3529 & 2375 & 1353 & 993 & 623 & 556 & 272 & 263 & 75 & 52 & 48 & 435\\
\textbf{Ratio} & 33.5\% & 22.6\% & 12.9\% & 9.4\% & 5.9\% & 5.3\%  & 2.6\% & 2.5\% & 0.7\% & 0.5\% & 0.5\% & 4.1\%\\
\bottomrule
\end{tabular}
\caption{Distribution of subjects in the training and validation dataset.}
\label{tab:subject_distribution}
\end{table}

\begin{table*}[th]
\centering
\begin{threeparttable}
\resizebox{1.0\linewidth}{!}{
\begin{tabular}{lccl}
\toprule
\textbf{Name} & \makecell[c]{\textbf{Value}\\ \textbf{(for on-policy methods)}} &\makecell[c]{\textbf{Value}\\ \textbf{(for off-policy methods)}} & \textbf{Description} \\ 
\midrule
num\_training\_prompts & 10k & 10k & \makecell[l]{Default number of prompts used for RL finetuning,\\ unless otherwise specified.} \\
training\_steps & 400 & 400 & Total number of gradient update steps. \\
eval\_freq & 40 & 40 & Interval (in updates) between two evaluations. \\
training\_batch & 96 & 96 & Effective batch size accumulated per update. \\
learning\_rate & \(1e^{-6}\) & \(1e^{-6}\) & Step size of optimizer during training. \\
max\_prompt\_length & 1024 & 1024 & Maximum length of input prompt tokens. \\
max\_response\_length & 5120 & 5120 & Maximum number of tokens for model outputs. \\
n\_rollouts & 8 & 8 & Number of sampled rollouts for each prompt. \\
n\_refine & 0 & 1 & Maximum number of off-policy refinements per rollout. \\
reward\_range & [0,1] & [0,1] & Value range of scalar rewards. \\
kl\_loss\_coef & 0.001 & 0.0 & Weight assigned to KL divergence regularization. \\
\(\gamma\) & None & 0.1 & Coefficient for policy shaping during off-policy updates. \\
train\_temp & 1.0 & 1.0 & Sampling temperature used for training rollouts. \\
val\_temp & 0.0 & 0.0 & Sampling temperature used in validation runs. \\
total\_epochs & 4 & 4 & Number of complete passes over the dataset. \\
\midrule
\multicolumn{4}{l}{\textbf{Evaluation}} \\
\midrule
eval\_temp & 0.0 & 0.0 & Sampling temperature for evaluation generation. \\
max\_tokens & 8192 & 8192 & Token budget limit for evaluation inference. \\
\bottomrule
\end{tabular}
}
\caption{Default hyperparameters and configurations used in RL finetuning and evaluation.}
\vspace{-2mm}
\label{tab:hyperparameters}
\end{threeparttable}
\end{table*}

\subsection{Evaluation}
\label{sec:eval_benchmarks}

We follow the \textit{Language Model Open Science Evaluation} framework\footnote{{https://github.com/GAIR-NLP/lm-open-science-evaluation}} framework for all tasks evaluation~\citep{fan2025megascience}, an open-source evaluation suite designed for standardized and reproducible benchmarking of large language models. 
This system supports both conversation and base models, provides flexible integration of new tasks, and enables large-scale multi-node evaluations with detailed instance-level outputs.

To comprehensively assess reasoning ability, we adopt the same benchmark collection as in the \textit{Open Science Evaluation Suite}, covering a wide range of disciplines and reasoning types. Specifically, we evaluate models on:
\vspace{-2mm}
\begin{itemize}[leftmargin=10pt]
    \item \textbf{General Scientific Reasoning:} MMLU~\citep{hendrycks2020measuring}, MMLU-Pro~\citep{wang2024mmlu}, GPQA-Diamond~\citep{rein2024gpqa}, GPQA-Main~\citep{rein2024gpqa}, and OlympicArena~\citep{huang2024olympicarena};
    \item \textbf{Mathematical Reasoning:} Sci-Math~\citep{wang2023scibench}, 
     MATH~\citep{hendrycks2021measuring}, and MATH500~\citep{lightman2023let}.
    \item \textbf{Physics Reasoning:} PIQA~\citep{bisk2020piqa} and Sci-Physics~\citep{wang2023scibench};
    \item \textbf{Chemistry Reasoning:} ChemBench~\citep{mirza2024large},  and Sci-Chemistry~\citep{wang2023scibench};
    \item \textbf{Out-of-distribution Reasoning:}  CS-Bench~\citep{song2024cs} and MedMCQA~\citep{pal2022medmcqa};
    
\end{itemize}

This unified evaluation protocol allows fair and consistent comparison across diverse scientific domains.
We set the generation temperature to 0.0 and the maximum response length to 8192 for all evaluation tasks.

\begin{table}[h]
    \centering
    \resizebox{\textwidth}{!}{ 
    \begin{tabular}{l|lcccc}
    \toprule
    \textbf{Category} & \textbf{Benchmark} & \textbf{Question Type} & \textbf{CoT} & \textbf{Unit} & \textbf{Metric} \\
    \midrule
    \multirow{6}{*}{General Reasoning} & MMLU~\citep{hendrycks2020measuring} & Multi-Choice & \ding{51} & \ding{55} & EM \\
    & MMLU-Pro~\citep{wang2024mmlu} &  Multi-Choice & \ding{51} & \ding{55} & EM \\
    & GPQA-Diamond~\citep{rein2024gpqa} &  Multi-Choice & \ding{51} & \ding{55} & EM \\
    & GPQA-Main~\citep{rein2024gpqa} &  Multi-Choice & \ding{51} & \ding{55} & EM \\
    & OlympicArena~\cite{huang2024olympicarena} & Computational Problems & \ding{51} & \ding{51} & EM (unit) \\
    \midrule
    \multirow{3}{*}{Math} & Sci-Math~\citep{wang2023scibench} & Computational Problems & \ding{51} & \ding{51} & EM \\
    & MATH~\citep{hendrycks2021measuring} & Computational Problems & \ding{51} & \ding{55} & EM \\
    & MATH500~\citep{lightman2023let} & Computational Problems & \ding{51} & \ding{55} & EM \\
    \midrule
    \multirow{2}{*}{Chemistry} & ChemBench~\citep{mirza2024large} & Multi-Choice  \& Problem-Solving  & \ding{51} & \ding{55} & EM \\
    & Sci-Chemistry~\citep{wang2023scibench} & Computational Problems & \ding{51} & \ding{51} & EM \\
    \midrule
     \multirow{2}{*}{Physics} & PIQA~\citep{bisk2020piqa} & Multi-Choice  & \ding{51} & \ding{55} & EM \\
     & Sci-Physics~\citep{wang2023scibench} & Computational Problems & \ding{51} & \ding{51} & EM \\
    \midrule
    Computer Science & CS-Bench~\citep{song2024cs} & Multi-Choice \& True/False & \ding{51} & \ding{55} & EM \\
    \midrule
    {Medicine} 
    & MedMCQA~\citep{pal2022medmcqa} & Multi-Choice & \ding{51} & \ding{55} & EM \\
    \bottomrule
    \end{tabular}
    }
    \label{tab:eval_setting}
    \caption{The evaluation configurations used in our experiments. \textbf{CoT} denotes evaluations conducted with chain-of-thought prompting. \textbf{Unit} indicates that the answer requires a correct physical unit. \textbf{EM (unit)} measures exact match accuracy considering both the numerical value and its associated unit.}
\end{table}

\section{RGR-GRPO Algorithm}
\label{sec:alg}

We summarize the main algorithmic pipeline of our RGR-GRPO in Algorithm~\ref{alg:rgr_grpo}. It consists of three stages: first, we assess whether off-policy guidance is needed to overcome exploration limitations (Exploration Assessment); then, we refine the best rollout based on failed rubrics (Rubric-Based Self-Refinement); finally, we merge the on-policy and off-policy rollouts to update the policy model (Mix-Policy GRPO).

\begin{algorithm}[H]
\caption{\textbf{RGR-GRPO: Reward and Guidance through Rubrics}}
\label{alg:rgr_grpo}
\begin{algorithmic}
    \STATE \textbf{Input:} Pretrained LLM policy $\pi_{\text{old}}$ parameterized by $\theta$, 
    reward model $\pi_{RM}$, rubric-based reward function $\text{Reward}(\cdot)$, 
    question set $Q = \{q\}$, and self-refinement template $T_{\text{refine}}$
    \STATE \textbf{Goal:} Incorporate rubric-based reward and off-policy refinement to improve reasoning robustness

    \vspace{3pt}
    
    \FOR{each question $q \in Q$}
    \STATE \textbf{Step 1: Exploration Assessment}
        \STATE Sample $G-1$ initial responses from the old policy: 
        \[
        \{o_i\}_{i=1}^{G-1} \sim \pi_{\text{old}}(\cdot \mid q)
        \]
        \STATE Evaluate each response with the rubric-based reward function:
        \[
        r_i,\ \{s_k(q,o_i)\}_{k=1}^{|\mathcal{C}_i|} \leftarrow \text{Reward}(q,o_i)
        \]
        \STATE Identify the best-performing response:
        \[
        o_{\text{best}} = \operatorname*{arg\,max}_{o_i \in \{o_1, \dots, o_{G-1}\}} r_i
        \]
        \IF{$\sum_{k=1}^{|\mathcal{C}_{\text{best}}|} s_k(q,o_{\text{best}}) = |\mathcal{C}_{\text{best}}|$}
    \STATE On-policy exploration suffices. Perform on-policy GRPO update using on-policy rollouts: \vspace{-2mm}
    \[
    o_G \sim \pi_{\text{old}}(\cdot \mid q) , \hspace{0.5cm}
        \mathcal{J}_{\text{On-policy}}=\mathbb{E}_{q \sim \mathcal{D},\, \{o_i\}_{i=1}^{G} \sim \pi_{\theta_{\text{old}}}(\cdot|q)} 
         \frac{1}{G} \sum_{i=1}^{G} \frac{1}{|o_i|} 
            \sum_{t=1}^{|o_i|}
              r^{(i)}_{t}(\theta) \hat{A_i}
    \]
    where
\[
r_G \leftarrow \text{Reward}(q,o_G), \hspace{0.5cm}
r^{(i)}_{t}(\theta) = \frac{\pi_{\theta}(o_i \mid q,o_{<t}^{(i)})}{\pi_{\theta_{\text{old}}}(o_i \mid q,o_{<t}^{(i)})}, \hspace{0.5cm}
\hat{A_i} = \frac{r_i - \mathrm{mean}\left(\{r_j\}_{j=1}^{G}\right)}
{\mathrm{std}\left(\{r_j\}_{j=1}^{G}\right)}, 
\]
\STATE \textbf{Continue to next iteration}
        \ENDIF

    \vspace{3pt}
    \STATE \textbf{Step 2: Rubric-Based Self-Refinement}
    \STATE Identify failed rubric items:
        \[
        \mathcal{C}^{\text{failed}} = \{c_k \mid s_k(q, o_{\text{best}})=0\}
        \]
        \STATE Generate an off-policy refined response conditioned on the failures:
        \[
        o_G \sim \pi_{\text{old}}(\cdot \mid q, o_{\text{best}}, \mathcal{C}^{\text{failed}})
        , \hspace{5mm}
        r_G \leftarrow \text{Reward}(q,o_G)
        \]

    \vspace{3pt}
    \STATE \textbf{Step 3: Mix-Policy GRPO}
    \STATE Shape the off-policy probability ratios:
    \vspace{-2mm}
    \[
    f_{\text{shape}}\big(r_t^{(G)}(\theta)\big) =
    \frac{\pi_\theta(o_t^{(G)} \mid q,o_{<t}^{(G)})}
    {\pi_\theta(o_t^{(G)} \mid q,o_{<t}^{(G)}) + \gamma}
    \]
    \STATE Optimize the mixed-policy objective combining on- and off-policy terms:
    \begin{equation}
    \begin{aligned}
    \mathcal{J}_{\text{RGR-GRPO}}(\theta) =
    &\mathbb{E}_{q \sim Q, \{o_i\}_{i=1}^{G-1} \sim \pi_{\text{old}},\ o_G \sim \pi_{\text{old}}(\cdot \mid q, o_{\text{best}}, \mathcal{C}^{\text{failed}})} \\
    &\frac{1}{G} \Bigg[
    \underbrace{\sum_{i=1}^{G-1} \sum_{t=1}^{|o_i|} r_{t}^{(i)}(\theta)\hat{A}_i}_{\text{On-policy objective}}
    + 
    \underbrace{\sum_{t=1}^{|o_G|} f_{\text{shape}}\big(r_{t}^{(G)}(\theta)\big)\hat{A}_G}_{\text{Off-policy objective}}
    \Bigg].
    \end{aligned}
    \end{equation}

    \STATE \textbf{Output:} Fine-tuned LLM policy $\pi_{\theta}$
    \ENDFOR
\end{algorithmic}
\end{algorithm}

\section{Necessity of Exploration Assessment}
\label{sec:proof}

\begin{theorem}[Necessity of Exploration Assessment]
\textit{The Exploration Assessment (EA) mechanism, which conditionally applies off-policy refinement (Section \ref{sec:off-policy}, Step 2-3) only when the on-policy exploration upper bound is insufficient (i.e., no perfect solution $o_{\text{best}}$ is found), is a necessary component for stabilizing the RGR-GRPO training process. It functions as an adaptive variance controller, preventing unnecessary, high-variance gradient updates that can lead to excessive distributional shift and training collapse (i.e., entropy explosion).}
\end{theorem}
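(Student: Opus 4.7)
The plan is to reformulate ``necessity'' operationally: we will show that without Exploration Assessment (EA), the off-policy term in Eq.~(\ref{eq:RGR-GRPO}) contributes a gradient component whose variance is unbounded in the regime where $o_{\text{best}}$ already satisfies all rubric criteria, whereas with EA this branch is pruned and the update degenerates to a pure on-policy GRPO step whose variance is controlled by the standard clipping/shaping analysis. The argument proceeds in three stages: (i) characterize the degenerate regime $\mathcal{C}^{\text{failed}}=\emptyset$ induced by a perfect $o_{\text{best}}$; (ii) show that triggering Step~2--3 in this regime yields an off-policy rollout whose importance weights and advantage estimates are jointly ill-behaved; (iii) conclude that EA is the minimal intervention that eliminates this pathology while preserving the benefits of off-policy refinement in the complementary regime.

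For stage (i), I would note that when $o_{\text{best}}$ satisfies all criteria, the self-refinement template $T_{\text{refine}}$ is conditioned on an empty failure set, so the nominal conditional $\pi_{\theta_{\text{old}}}(\cdot\mid q,o_{\text{best}},\emptyset)$ is either an out-of-distribution prompt (the template never saw empty failures during pre-training) or a near-identity transformation that re-emits $o_{\text{best}}$. For stage (ii), I would decompose the off-policy gradient contribution
\[
g_{\text{off}} = \sum_{t=1}^{|o_G|} \nabla_\theta f_{\text{shape}}\bigl(r_t^{(G)}(\theta)\bigr)\,\hat{A}_G,
\]
and analyze its variance under the two sub-cases. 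If $o_G$ echoes $o_{\text{best}}$, then within-group standardization in Eq.~(\ref{eq:eq_1}) drives $\mathrm{std}(\{r_j\})\to 0$, so $\hat{A}_G$ either vanishes or explodes depending on numerical floors; if $o_G$ drifts off-distribution, then the token-level importance ratios $r_t^{(G)}(\theta)$ become arbitrarily small on rare tokens, and since $f_{\text{shape}}(x)=x/(x+\gamma)$ has unbounded relative amplification precisely where $x\ll \gamma$, the gradient norm can grow without bound along such tokens. A variance-decomposition bound of the form $\mathrm{Var}(\nabla\mathcal{J}_{\text{RGR-GRPO}}) \geq \mathbb{E}[\|g_{\text{off}}\|^2] - \|\mathbb{E}[g_{\text{off}}]\|^2$ then makes the instability quantitative, and I would connect this to the entropy dynamics by observing that a single large off-policy push on rare tokens translates into discontinuous jumps in $H(\pi_\theta)$, matching the entropy-explosion phenomenon reported for Critique-GRPO and LUFFY.

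For stage (iii), I would close the argument by contrasting the two branches of the EA rule. When EA triggers the on-policy branch, $o_G \sim \pi_{\theta_{\text{old}}}(\cdot\mid q)$ and no shaping is applied, so $g_{\text{off}}$ reduces to a standard GRPO term whose variance is bounded by the usual PPO-style analysis. When EA triggers the off-policy branch, $\mathcal{C}^{\text{failed}}\neq\emptyset$ ensures the refinement prompt is well-posed and $\hat{A}_G$ is informative relative to an imperfect group. Thus EA acts as an indicator function that routes each sample to the update rule with provably tighter variance, and removing it re-admits precisely the degenerate case analyzed in stage~(ii). The main obstacle I anticipate is formalizing ``necessity'' in a non-vacuous way: a fully rigorous statement would require either a worst-case construction showing that some prompt distribution forces entropy divergence without EA, or an expected-variance inequality $\mathrm{Var}(\nabla\mathcal{J}_{\text{EA}}) \leq \mathrm{Var}(\nabla\mathcal{J}_{\text{no-EA}})$ under mild assumptions on the rubric-satisfaction rate. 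I would pursue the latter, combined with an empirical verification via the entropy and ratio-probability trajectories already reported in Figures~\ref{fig:ratio_prob} and~\ref{fig:entropy_valid}, supplemented by an ablation (``RGR-GRPO w/o EA'') whose degraded average score in Table~\ref{tab:ablation} serves as the concrete instantiation of the theoretical claim.
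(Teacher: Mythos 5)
Your proposal lands on the same core argument as the paper: Exploration Assessment is justified as an adaptive variance controller that routes each group to whichever of the two gradient estimators (pure on-policy vs.\ mixed) has tighter variance, and the ``necessity'' claim reduces to showing that firing the off-policy branch when a perfect $o_{\text{best}}$ already exists adds variance without adding benefit. Where you differ is in the supporting mechanism. The paper grounds the high variance of the off-policy term generically in the importance-sampling mismatch, i.e.\ $D_{KL}(\pi_{\text{refine}} \parallel \pi_{\text{on}}) > 0$ whenever $\mathcal{C}^{\text{failed}} \neq \emptyset$, and then asserts $\mathrm{Var}(\mathbf{g}_{\text{mix}}) > \mathrm{Var}(\mathbf{g}_{\text{on}})$ by a term-wise decomposition; it never examines what the refinement prompt actually does when $\mathcal{C}^{\text{failed}} = \emptyset$. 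Your stage (i)--(ii) analysis of that degenerate regime (ill-posed empty-failure prompt, echo vs.\ drift dichotomy) is a genuinely different and more concrete route, and your candid admission that ``necessity'' requires either a worst-case construction or an expected-variance inequality is more honest about the gap than the paper, whose argument is likewise heuristic. Your proposed tie-in to the \texttt{w/o EA} ablation and the entropy/mixing-ratio curves matches how the paper actually substantiates the claim empirically.

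Two technical points in your stage (ii) would not survive scrutiny. First, the claim that $f_{\text{shape}}(x) = x/(x+\gamma)$ ``has unbounded relative amplification where $x \ll \gamma$'' and that ``the gradient norm can grow without bound'' is wrong: $f_{\text{shape}}$ is bounded in $[0,1)$, and $\nabla_\theta f_{\text{shape}} = \frac{\gamma\,\pi_\theta}{(\pi_\theta+\gamma)^2}\nabla_\theta \log \pi_\theta$ carries a coefficient bounded by $1/4$ that in fact vanishes as $\pi_\theta \to 0$; the shaping function is introduced precisely to cap this term, so rare-token drift cannot produce an unbounded gradient through this channel. Second, the $\mathrm{std}\to 0$ blow-up of $\hat{A}_G$ requires the entire group's rewards to be degenerate, not merely that $o_G$ echoes $o_{\text{best}}$, and the paper's implementation explicitly drops the standard-deviation scaling from the advantage, so this pathology is absent in the actual objective. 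Neither error is fatal to your overall skeleton, since the conclusion can still be carried by the IS-mismatch argument the paper uses, but as written stage (ii) attributes the instability to mechanisms that are either bounded by construction or not present in the implemented algorithm.
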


\begin{proof}
Our proof is structured by analyzing the variance and distributional impact of the two distinct gradient estimators employed by the RGR-GRPO framework, contingent on the EA's decision.

\paragraph{1. Definitions: Gradient Estimators and Distributions}
Let us define the two relevant sampling distributions:
\begin{itemize}
    \item \textbf{On-policy distribution} $\pi_{\text{on}}$: This is the baseline policy from which initial rollouts are sampled, $\pi_{\text{on}} = \pi_{\theta_{\text{old}}}(\cdot \mid q)$.
    \item \textbf{Off-policy refinement distribution} $\pi_{\text{refine}}$: This is the conditional policy used to generate the refined response $o_G$, $\pi_{\text{refine}} = \pi_{\theta_{\text{old}}}(\cdot \mid q, o_{\text{best}}, \mathcal{C}^{\text{failed}})$.
\end{itemize}
When $\mathcal{C}^{\text{failed}}$ is non-empty (i.e., refinement is needed), $\pi_{\text{refine}}$ is explicitly conditioned on new information, making it distinct from the base policy. Thus, the Kullback-Leibler (KL) divergence is non-zero: $D_{KL}(\pi_{\text{refine}} \parallel \pi_{\text{on}}) > 0$.

Based on the EA's decision, one of two gradient estimators is used for the policy update $\theta_{t+1} \leftarrow \theta_t + \eta \mathbf{g}$:
\begin{itemize}
    \item \textbf{On-Policy Gradient (EA "If" Branch):} $\mathbf{g}_{\text{on}}$. This is the standard on-policy GRPO gradient (as in Eq. (\ref{eq:eq_1} -\ref{eq:eq_2}), where all $G$ samples, including $o_G$, are drawn from $\pi_{\text{on}}$:
    \begin{equation}
    \mathbf{g}_{\text{on}} = \nabla_{\theta} \mathbb{E}_{\{o_i\}_{i=1}^G \sim \pi_{\text{on}}} \left[ \frac{1}{G} \sum_{i=1}^{G} \sum_{t=1}^{|o_i|} r_{t}^{(i)}(\theta) \hat{A_i} \right]
    \end{equation}
    \item \textbf{Mix-Policy Gradient (EA "Else" Branch):} $\mathbf{g}_{\text{mix}}$. This is the mixed-policy objective from Eq. (\ref{eq:RGR-GRPO}), based on $G-1$ samples from $\pi_{\text{on}}$ and one sample $o_G$ from $\pi_{\text{refine}}$:
    \begin{equation}
    \mathbf{g}_{\text{mix}} = \nabla_{\theta} \mathbb{E}_{\substack{\{o_i\}_{i=1}^{G-1} \sim \pi_{\text{on}} \\ o_G \sim \pi_{\text{refine}}}} \left[ \mathcal{J}_{\text{RGR-GRPO}}(\theta) \right]
    \end{equation}
\end{itemize}

\paragraph{2. Variance Analysis of Gradient Estimators}
In reinforcement learning, training stability is inversely related to the variance of the policy gradient estimator, $\text{Var}(\mathbf{g})$. High-variance gradients can cause large, erratic policy updates, leading to the "entropy explosion" or "policy collapse" phenomenon~\citep{yan2025learning,zhou2025breaking}.

\begin{itemize}
    \item \textit{Variance of $\mathbf{g}_{\text{on}}$:} $\text{Var}(\mathbf{g}_{\text{on}})$ represents the baseline, on-policy gradient variance, which is generally considered the most stable estimator.
    \item \textit{Variance of $\mathbf{g}_{\text{mix}}$:} The estimator $\mathbf{g}_{\text{mix}}$ is a sum of low-variance on-policy terms and one high-variance off-policy term:
    \begin{equation}
    \mathbf{g}_{\text{mix}} = \frac{1}{G} \left[ \sum_{i=1}^{G-1} \mathbf{g}_{\text{on}}^{(i)} + \mathbf{g}_{\text{off}}^{(G)} \right]
    \end{equation}
    where $\mathbf{g}_{\text{off}}^{(G)}$ is the gradient of the off-policy objective $\mathcal{L}_{\text{off}}^{(G)} = \sum_{t=1}^{|o_G|} f_{shape}(r_{t}^{(G)}(\theta)) \hat{A}_G$.
\end{itemize}

The off-policy gradient $\mathbf{g}_{\text{off}}^{(G)}$ has an inherently high variance. This is because the loss function $\mathcal{L}_{\text{off}}^{(G)}$ uses samples $o_G \sim \pi_{\text{refine}}$ but relies on an importance sampling (IS) ratio $r_t^{(G)}(\theta)$ (or a function $f_{shape}$ of it) whose denominator is based on $\pi_{\text{on}} = \pi_{\theta_{\text{old}}}$ (as defined in Eq. (\ref{eq:RGR-GRPO})). The variance of IS-based estimators scales with the divergence between the sampling distribution and the target distribution \citep{shapiro2003monte}. In this case, the mismatch $D_{KL}(\pi_{\text{refine}} \parallel \pi_{\text{on}}) > 0$ fundamentally leads to $\text{Var}(\mathbf{g}_{\text{off}}^{(G)}) \gg \text{Var}(\mathbf{g}_{\text{on}}^{(i)})$.

Consequently, the overall variance of the mixed-policy gradient is significantly higher than the pure on-policy gradient:
\begin{equation}
\text{Var}(\mathbf{g}_{\text{mix}}) \approx \frac{(G-1)\text{Var}(\mathbf{g}_{\text{on}}^{(i)}) + \text{Var}(\mathbf{g}_{\text{off}}^{(G)})}{G^2} > \frac{G \cdot \text{Var}(\mathbf{g}_{\text{on}}^{(i)})}{G^2} \approx \text{Var}(\mathbf{g}_{\text{on}})
\end{equation}

\paragraph{3. The Role of Exploration Assessment as an Adaptive Controller}
The EA mechanism dynamically selects between these two estimators based on the policy's current performance, effectively balancing exploration efficacy with update stability.

\begin{itemize}
    \item \textbf{Case 1: Sufficient Exploration ($\sum s_k = |\mathcal{C}_i|$).}
    The on-policy exploration upper bound is sufficient; $\pi_{\text{on}}$ can already generate an optimal solution $o_{\text{best}}$.
    \begin{itemize}
        \item \textit{EA Action:} The EA selects the \textbf{low-variance} update $\mathbf{g}_{\text{on}}$.
        \item \textit{Justification:} The goal of exploration (finding a perfect solution) is already met. The optimal action is to reinforce this existing behavior stably.
        \item \textit{Risk Avoided:} If we were to \textit{unconditionally} use $\mathbf{g}_{\text{mix}}$ (i.e., without EA), we would be injecting a high-variance $\mathbf{g}_{\text{off}}^{(G)}$ gradient \textit{unnecessarily}. This high-variance, "risky" update provides no additional benefit (the policy is already optimal) but introduces a significant risk of destabilizing the policy, i.e., $D_{KL}(\pi_{\theta_{t+1}} \parallel \pi_{\theta_t})$ could be large and uncontrolled.
    \end{itemize}

    \item \textbf{Case 2: Insufficient Exploration ($\sum s_k < |\mathcal{C}_i|$).}
    The on-policy exploration upper bound is low; $\pi_{\text{on}}$ is stuck in a sub-optimal space, and $o_{\text{best}}$ is imperfect.
    \begin{itemize}
        \item \textit{EA Action:} The EA selects the \textbf{high-variance} update $\mathbf{g}_{\text{mix}}$.
        \item \textit{Justification:} The stable, low-variance update $\mathbf{g}_{\text{on}}$ would merely reinforce the sub-optimal $o_{\text{best}}$. The high-variance $\mathbf{g}_{\text{off}}^{(G)}$ term, while risky, is \textit{necessary}. It forces the policy to learn from the new, high-reward, out-of-distribution trajectory $o_G \sim \pi_{\text{refine}}$, effectively expanding the exploration boundary. This is a deliberate "stability-for-exploration" trade-off.
    \end{itemize}
\end{itemize}

\paragraph{Conclusion.}
The Exploration Assessment acts as an adaptive variance controller. It defaults to the stable, low-variance on-policy update $\mathbf{g}_{\text{on}}$ whenever possible (Case 1), thus preserving training stability. It only engages the high-variance, exploratory $\mathbf{g}_{\text{mix}}$ update when it is strictly necessary to overcome the limitations of on-policy exploration (Case 2).

By preventing the \textit{unnecessary} application of high-variance, off-policy gradient corrections, the Exploration Assessment mechanism minimizes the average gradient variance over the training horizon, thereby substantially reducing the risk of entropy explosion and ensuring a more robust and stable policy optimization process.
\end{proof}

\begin{figure*}[t]
    \centering
    \centerline{\includegraphics[width=\linewidth]{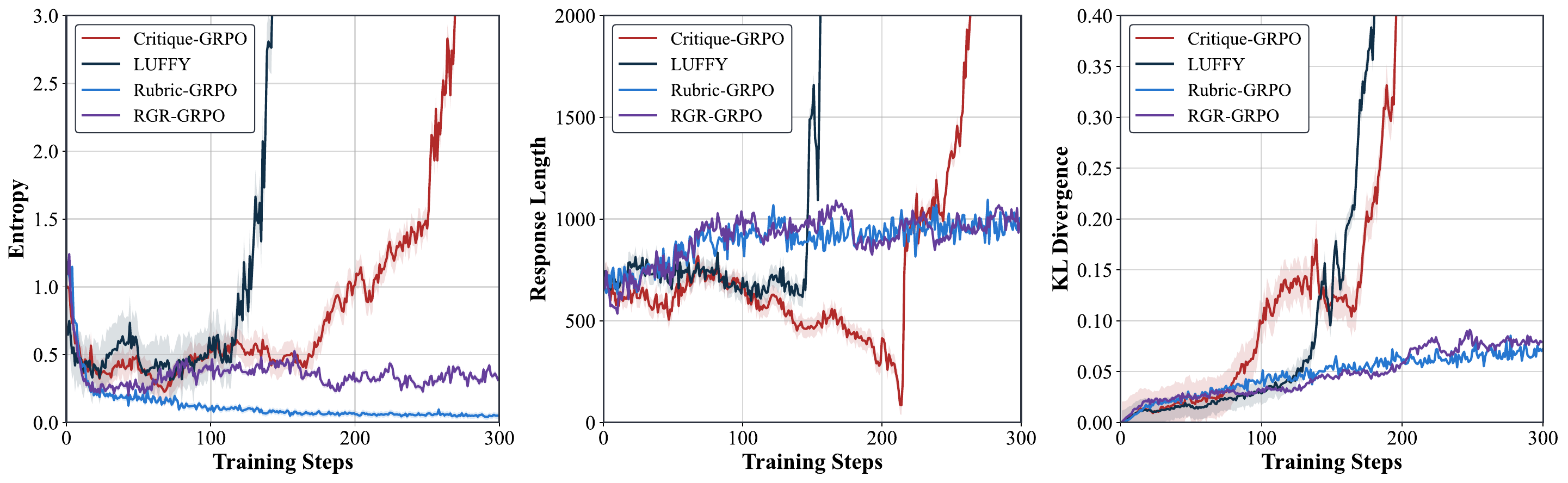}}
    \vspace{-5pt}
    \caption{Pass@\textit{k} performance (\%) of Qwen2.5-7B across physics, chemistry, and math subjects in Sci-Bench.} 
    \label{fig:off_policy}
    \vspace{-8pt}
\end{figure*}

\section{Analysis of Off-Policy Training Stability}
\label{sec:stability_analysis}

Despite their reported best performance during training, our reproductions of LUFFY~\citep{yan2025learning} and Critique-GRPO~\citep{zhang2025critique} performed worse than anticipated in Table \ref{tab:main}, frequently underperforming their on-policy counterparts. We attribute this discrepancy primarily to the adverse effects of excessive distributional shift introduced by off-policy guidance.
Figure \ref{fig:off_policy}, which plots key training metrics including policy entropy, response length, and KL divergence, reveals this instability of these off-policy baselines. Both Critique-GRPO and LUFFY experience training collapse mid-run (around 100+ steps), characterized by an uncontrolled spike in all three metrics. Qualitative inspection of the generated outputs at this stage confirmed this collapse, highlighting the inherent instability of these off-policy methods.

This observation aligns with the findings of \citet{zhou2025breaking}, who attempted to mitigate this issue by using a sigmoid function scheduled over training steps to control the off-policy mixing ratio, effectively reverting to purely on-policy rollouts in the latter half of training. However, such a time-based schedule lacks robustness as it is not adaptive to the policy's actual learning progress.
In contrast, our RGR-GRPO employs the \textit{Exploration Assessment} mechanism to dynamically control the mixing ratio of off-policy rollouts throughout the entire training process. As shown in Figure \ref{fig:ratio_prob} (Left), this ratio adaptively and smoothly decreases as the policy's actual exploration boundary improves. Further evidence of this stability is presented in Figure \ref{fig:off_policy}, where RGR-GRPO's curves for entropy, response length, and KL divergence are nearly identical to those of the stable on-policy Rubric-GRPO, confirming its robustness.

\section{Prompt Templates}
\label{sec:prompt_template}

We provide all prompt templates used in our experiments. During RL training, we deploy \textit{Qwen3-32B} as the LLM-as-Judge to supply reward feedback for the Likert- and rubric-based methods (see Section~\ref{sec:llm_judge}).
In our RGR-GRPO, the prompt-policy model listed in Section~\ref{sec:refine_prompt} is used to produce refined responses conditioned on failed rubrics. 
For every filtered example, we use \textit{O3} to generate question-specific rubrics according to Section~\ref{sec:rubric_gen_prompt}.

\subsection{LLM-Judge Prompts}

\label{sec:llm_judge}
\begin{tcolorbox}[
  colback=lightblue!10,
  colframe=lightblue!30,
  left=2mm,
  right=2mm,
  title=\textcolor{black}{\textbf{Prompt for Likert-Reward Judge}}
]
\textbf{System Prompt:}

You are an expert evaluator. Given a user prompt, a reference response, and a generated response, please rate the overall quality of the generated response on a scale of 1 to 10 based on how well it compares to the reference response.
Consider factors such as accuracy, completeness, coherence, and helpfulness when comparing to the reference. The reference response represents a high-quality answer that you should use as a benchmark.
Start your response with a valid JSON object that starts with “\verb|```|json” and ends with “\verb|```|”. The JSON object should contain a single key “rating” and the value should be an integer between 1 and 10.

Example response:\\
\verb|```|json\\
\{\\
\string
  ``rating'': 7\\
\}\verb|```|\\

\textbf{User Prompt:}

Given the following prompt, reference response, and generated response, please rate the overall quality of the generated response on a scale of 1 to 10 based on how well it compares to the reference.

\begin{verbatim}
<prompt>
{prompt}
</prompt>

<reference_response>
{reference}
</reference_response>

<generated_response>
{response}
</generated_response>
\end{verbatim}

Your JSON Evaluation:
\end{tcolorbox}

\begin{tcolorbox}[
  colback=lightblue!10,
  colframe=lightblue!30,
  left=2mm,
  right=2mm,
  title=\textcolor{black}{\textbf{Prompt for Rubric-Reward Judge}}
]
\textbf{System Prompt:}

You are an expert evaluator. Given a user prompt, a generated response, and a single quality criterion, please judge whether the response satisfies the criterion (1) or does not satisfy the criterion (0).
Do not consider any other quality aspects outside the provided criterion. Your evaluation must be strictly limited to whether the response meets the specified criterion.
Start your response with a valid JSON object that starts with “\verb|```|json” and ends with “\verb|```|”. The JSON object should contain a single key “rating” and the value should be either 1 (criterion satisfied) or 0 (criterion not satisfied).

Example response:\\
\verb|```|json\\
\{\\
\string
  ``rating'': 1\\
\}\verb|```|\\

\textbf{User Prompt:}

Given the following prompt, response, and evaluation criterion, please judge whether the response satisfies the specified criterion (1) or does not satisfy it (0). Ignore all other factors outside the criterion.

\begin{verbatim}
<prompt>
{prompt}
</prompt>

<response>
{response}
</response>

<criterion>
{single_rubric_criterion}
</criterion>
\end{verbatim}

Your JSON Evaluation:
\end{tcolorbox}

\subsection{Self-Refinement with Failed Rubrics}
\label{sec:refine_prompt}

\begin{tcolorbox}[
  colback=lightblue!10,
  colframe=lightblue!30,
  left=2mm,
  right=2mm,
  title=\textcolor{black}{\textbf{Prompt for Rubric-Guided Refinement}}
]
\textbf{System:} You are a helpful assistant.

\textbf{User:} Given the following inputs:

Question:
\begin{verbatim}
{base_text}
\end{verbatim}
Previous Response:
\begin{verbatim}
{selected_response}
\end{verbatim}
Rubrics:
\begin{verbatim}
{rubrics}
\end{verbatim}
Instruction:

Refine the Previous response so it fully satisfies all items in Rubrics.
Provide clear, concise, step-by-step reasoning that leads to the correct result.
Put your final answer within \texttt{\textbackslash boxed\{{}\}}.

Refined Response:
\end{tcolorbox}

\subsection{Rubric Generation}
\label{sec:rubric_gen_prompt}
\begin{tcolorbox}[
  colback=lightblue!10,
  colframe=lightblue!30,
  left=2mm,
  right=2mm,
  title=\textcolor{black}{\textbf{Prompt for Rubric Generator}}
]
\small
\textbf{System Prompt:}

You are an expert rubric writer for a wide range of academic disciplines (e.g., Physics, Chemistry, Mathematics). Your job is to generate a self-contained set of evaluation criteria (“rubrics”) for judging how good a response is to a given question in one of these disciplines. Rubrics may cover aspects such as factual correctness, reasoning/process, completeness, and computational accuracy. Each rubric item must be fully self-contained so that a non-expert reader can easily and unambiguously decide whether the criterion is satisfied.

\textbf{Inputs:}
\begin{itemize}[leftmargin=1.2em]
\item \texttt{question}: The full question text.
\item \texttt{reference\_answer}: The ideal (model) answer, including any key facts, explanations and reasoning steps.
\item \texttt{groundtruth}: The correct final fact(s) or result(s) for the question.
\end{itemize}

\textbf{High-level Rules:}
\begin{itemize}[leftmargin=1.2em]
\item \textbf{Total items:} Generate between 3 and 10 rubric items, choosing the number appropriate to the complexity of the question and prioritizing the most important evaluation aspects for the discipline and task.
\item \textbf{Categories:} Use exactly two category types and these exact prefix strings at the start of every \texttt{description}:
\begin{itemize}
\item \textbf{Factual Criteria:} for verifiable final facts or answers.
\item \textbf{Process Criteria:} for key intermediate steps, calculations, theorems, or reasoning required to derive the final answer.
\end{itemize}
\item \textbf{When to generate each category:}
\begin{itemize}
\item \textbf{Factual Criteria:} Generate only when the question has one or more determinable, verifiable final facts or answers.
If the question has multiple independent final facts or parts, generate a separate Factual Criterion for each.
The response is factually correct only if all Factual Criteria are satisfied.
\item \textbf{Process Criteria:} Generate for the \textit{necessary} intermediate results, explicit sub-answers, key equations, definitions, or theorems that must be invoked.
Each Process Criterion must be concrete and testable (e.g., “States that the derivative of $\sin(x)$ is $\cos(x)$” or “Shows substitution $u = x^2$ in the integral”).
Avoid vague wording such as “uses proper reasoning.”
\end{itemize}
\end{itemize}


\textbf{Item Format Rules:}
\begin{itemize}[leftmargin=1.2em]
\item Each rubric item must be an object with exactly two keys: \texttt{description}, \texttt{weight}.
\item \texttt{description:} One sentence beginning with the category prefix, followed by a clear, testable statement.
\item \texttt{weight:} Integer from 1 to 5 (5 = most important).
\item Description examples:
\begin{itemize}
\item Factual Criteria: States the correct final value of the integral as $\pi/2$.
\item Factual Criteria: Identifies mitochondria as the site of ATP production in eukaryotic cells.
\item Process Criteria: Shows the substitution $u = x^2$ and adjusts the limits accordingly before integrating.
\item Process Criteria: Applies Newton’s second law F = ma to set up the differential equation for motion.
\end{itemize}
\end{itemize}

\textbf{Additional Requirements:}
\begin{itemize}[leftmargin=1.2em]
\item All rubric items must be easy to evaluate as satisfied or not satisfied.
\item Avoid vague criteria like “demonstrates understanding” or “explains clearly.”
\item Output \textbf{only} the JSON array of rubric objects, each with keys \texttt{description} and \texttt{weight}.
\item The \texttt{description} must begin with one of the two exact prefixes above.
\item Each rubric item must evaluate a distinct, non-overlapping aspect of the answer. No duplication across items.
\end{itemize}

\textbf{User Prompt:}

Now, given the question, reference answer, and groundtruth, generate the rubric as described.
The reference answer is an ideal response but may not be fully correct or exhaustive; use it only as a guide.




\end{tcolorbox}

\end{document}